\DeclareMathOperator{\argmax}{argmax}
\newtheorem{assumption}[theorem]{Assumption}
\newcommand{\cA}{\mathcal{A}}
\newcommand{\cS}{\mathcal{S}}
\renewcommand{\epsilon}{\varepsilon}
\title[Central Limit Theorems for Asynchronous Averaged Q-Learning]{Central Limit Theorems for Asynchronous Averaged Q-Learning}
\begin{document}

\maketitle

\begin{abstract}
This paper establishes central limit theorems for Polyak–Ruppert averaged Q-learning under asynchronous updates. We prove a non-asymptotic central limit theorem, where the convergence rate in Wasserstein distance explicitly reflects the dependence on the number of iterations, state–action space size, the discount factor, and the quality of exploration. In addition, we derive a functional central limit theorem, showing that the partial-sum process converges weakly to a Brownian motion.
\end{abstract}

\section{Introduction}

Reinforcement Learning (RL) has emerged as a powerful paradigm in artificial intelligence, achieving successes in various applications such as Atari \citep{mnih2015human}, Go \citep{silver2016mastering}, robot manipulation \citep{tan2018sim,zeng2020tossingbot}, and aligning large language models to human preferences \citep{shao2024deepseekmath,ouyang2022training}. Q-learning \citep{watkins1992q}, which directly learns the optimal action-value function (Q-function) from experience trajectories, is one of the most widely used RL algorithms.

Stochastic approximation (SA) \citep{benveniste2012adaptive, borkar2008stochastic} is a general iterative framework to solve fixed-point equation problems. Since the Bellman operator in RL is a contraction map with a unique fixed point, many RL algorithms can be interpreted as instances of SA. For example, TD learning \citep{sutton1998reinforcement} can be viewed as an instance of linear SA. Synchronous Q-learning, by contrast, is a special case of nonlinear SA with martingale noise. The asynchronous Q-learning algorithm studied in this work, however, is a nonlinear SA problem with Markovian noise. There is a growing line of work on finite-sample analysis of SA with applications to RL algorithms \citep{Wai19CC, srikant2019finite, bhandari2018finite, chen2022finite, chen2020finite, qu2020finite, khodadadian2025general, chandak2025finite, chandak20251}.

Polyak-Ruppert averaging is a classical variance-reduction technique to stabilize and accelerate SA algorithms. A key motivation for focusing on Polyak-Ruppert averaging is its dual advantage of practical robustness and statistical efficiency. Standard stochastic approximation algorithms are often sensitive to the specific choice of decaying stepsize, requiring hyperparameter tuning to achieve stable convergence. By contrast, averaging the iterates makes the algorithm significantly more robust to the underlying stepsize schedule. Furthermore, Polyak-Ruppert averaging is known to achieve the optimal rate of convergence by minimizing the asymptotic covariance matrix of the estimates. In this paper, we are interested in establishing central limit theorems (CLTs) for Polyak-Ruppert averaged Q-learning under asynchronous updates. Building CLTs provides a foundational understanding of the algorithm's statistical properties. This asymptotic normality is crucial for uncertainty quantification and statistical inference in RL. Building on the seminal work by \citet{polyak1992acceleration}, a non-asymptotic CLT for Polyak-Ruppert averaged SGD was established \citep{anastasiou2019normal}. \citet{mou2020linear,samsonov2025statistical} derive non-asymptotic CLTs for linear SA with Polyak–Ruppert
averaged iterates. Similar results for two-time-scale SA are also studied \citep{han2024decoupled, hu2024central, kong2025nonasymptotic}. Recently, CLTs for SA with applications to RL algorithms are studied \citep{borkar2025ode}. As a special case linear SA, \citet{srikant2024rates, samsonov2024gaussian} derive non-asymptotic CLTs for TD-learning with averaging. However, non-asymptotic CLTs for Q-learning remain unexplored.

As a special case of nonlinear SA, Q-learning is substantially more challenging to analyze than linear SA and TD learning. Functional CLTs for Polyak–Ruppert averaged synchronous Q-learning was established in \citet{xie2022statistical, li2023statistical, panda2025asymptotic}. Synchronous Q-learning only considers martingale noises. By contrast, asynchronous Q-learning updates a single state–action pair based on one transition sample at each iteration, which involves Markovian noises that are non-IID. Moreover, the empirical Bellman operator in asynchronous Q-learning is non-smooth. Thus, the challenges in analyzing asynchronous Q-learning come from nonlinearity, Markovian samples, and a non-smooth operator. Recently, \citet{zhang2024constant} established a functional CLT for asynchronous Q-learning with a constant stepsize. Constant stepsize does not satisfy the necessary conditions for establishing a non-asymptotic CLT, which we detail in Section \ref{sec_main}. To the best of our knowledge, no non-asymptotic CLT is currently known for Q-learning, even in the synchronous setting. In this work, we close this gap and prove both a non-asymptotic CLT and a functional CLT for asynchronous averaged Q-learning with decaying stepsizes.

\section{Preliminaries}

An infinite-horizon discounted Markov decision process (MDP) is denoted by $\mathcal{M}$, and is defined by the tuple $\langle \cS, \cA, P, r, \gamma \rangle$ where $\cS$ is the set of states, $\cA$ is the action set, $P: \cS \times \cA \rightarrow \Delta_\cS$ is the transition probability function, and $\gamma \in [0, 1)$ is the discount factor. Let $\Delta_{\cA}$ denotes the simplex over the action space. The action-value function (Q-function) of a stationary and stochastic policy $\pi: \cS \rightarrow \Delta_{\cA}$ is defined as $Q^{\pi}(s,a) = \mathbb{E} \Big[\sum_{t=0}^\infty \gamma^t r(s_t, a_t)|s_0=s,a_0=a\Big]$, where $a_t \sim \pi( \cdot | s_t)$ and $s_{t+1} \sim P( \cdot | s_t, a_t)$. The optimal Q-function is defined as $Q^*:=\max_{\pi}Q^{\pi}$. The value function is defined as $V^{\pi} = \pi Q^{\pi}$, where $(\pi Q)(s) :=\langle \pi(\cdot | s), Q(s,\cdot) \rangle$. We also define $P^{\pi} \in \mathbb{R}^{|\mathcal{S}||\mathcal{A}|\times |\mathcal{S}||\mathcal{A}|}$ such that $P^{\pi}Q=P(\pi Q)$. We make the following assumption over a specific optimal policy.
\begin{assumption}
\label{asm_1}
There exists an optimal policy $\pi^*$ such that for $Q \in \mathbb{R}^{|\mathcal{S}|\times|\mathcal{A}|}$ we have $\| (P^{\pi}-P^{\pi^*})(Q-Q^*)\|_{\infty} \leq L\|Q-Q^*\|^2_{\infty}$ where $\pi(s):=\argmax_{a\in\mathcal{A}}Q(s,a)$.
\end{assumption}

Adopted from \citet{li2023statistical}, this assumption provides a localized smoothness condition required to establish asymptotic normality for Polyak-Ruppert averaging. Unlike linear stochastic approximation, the Bellman optimality operator is non-smooth due to the $\max$ operation, meaning the greedy policy $\pi$ can shift abruptly near $Q^*$. Assumption \ref{asm_1} addresses this by imposing a margin condition, where the difference in transition dynamics between the current and optimal policies must shrink at a quadratic rate relative to the estimation error, $||Q - Q^*||_\infty$. This quadratic decay ensures that non-linear residual terms vanish asymptotically, allowing the leading-order martingale noise to dictate the covariance of the averaged iterates.

The asynchronous Q-learning algorithm maintains a Q-function estimator $Q_k$ and the update rule is the following:
\begin{align}
Q_{k+1} = Q_{k} + \alpha_k(F_k - Q_k) \label{update_Q}
\end{align}
where we let $F_k = F(Q_k, y_k)$, $y_k = (s_k,a_k,s_{k+1})$,
\begin{align}
[F(Q_k, s_k,a_k,s_{k+1})](s,a) = \mathbbm{1}_{\{ (s_k,a_k)=(s,a) \}} \Gamma(Q_k,s_k,a_k,s_{k+1}) + Q_k(s,a), \label{update_F}
\end{align}
and
\begin{align*}
\Gamma(Q_k,s_k,a_k,s_{k+1}) = r_k(s_k,a_k) + \gamma \max_{a} Q_k(s_{k+1},a) - Q_k(s_k,a_k).
\end{align*}
$\Gamma$ is the temporal difference in the Q-function iterate. The sample trajectory $\{(s_k,a_k)\}$ is collected by the MDP under a behavior policy $\pi_b$. We define $V_k(s) := \max_a Q_k(s,a)$. Now we make the following assumption on the Markov chain, which is standard in the literature \citep{zhang2024constant,zhang2022global, chen2021lyapunov, li2020sample, qu2020finite}.

\begin{assumption}
\label{asm_2}
$\{y_k\}_{k\geq0}$ is an irreducible and aperiodic finite state Markov chain $\mathcal{M}$.
\end{assumption}
Under Assumption~\ref{asm_2}, the Markov chain $\mathcal{M}$ admits a unique stationary distribution $\tilde{\mu}$. We denote $\tilde{S}$ as the state-space and $\tilde{P}$ as the transition kernel. Next, we define the Bellman operator for the Q-function:
\begin{align*}
[\mathcal{T}(Q)](s,a) = r(s,a) + \gamma \mathbb{E}_{s^{\prime}\sim P(\cdot |s,a)} \max_{a^{\prime}\in \mathcal{A}} Q(s^{\prime},a^{\prime}).
\end{align*}
Define $\pi_k$ such that $\mathcal{T}(Q_k) = r + \gamma \pi_k Q_k$. Denote by $\bar{F}_k$ the expected value of $F(Q_k,y_k)$, i.e. $\bar{F}_k := \bar{F}(Q_k) :=  \mathbb{E}_{y_k \sim \tilde{\mu}}[F(Q_k,y_k)]$. Further, denote by $D \in \mathbb{R}^{|\mathcal{S}||\mathcal{A}|\times |\mathcal{S}||\mathcal{A}|}$ the diagonal matrix with $\{ p(s,a)\}_{(s,a)\in \mathcal{S}\times \mathcal{A}}$ on its diagonal, where $p(s,a)$ is the stationary visitation probability of the state-action pair $(s,a)$. We denote $\rho:=\underset{(s,a)\in\mathcal{S}\times \mathcal{A}}{\min} p(s,a)$, which captures the quality of exploration. The following lemma is a consequence of Assumption \ref{asm_2}.

\begin{lemma}[Proposition 3.1 in \citet{chen2021lyapunov}] Suppose that Assumption \ref{asm_2} holds, we have
$$\bar{F}(Q) = D \mathcal{T}(Q) + (I-D) Q.$$
\end{lemma}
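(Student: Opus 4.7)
The statement is a direct computation using the definition of $F$ together with the tower property of conditional expectation under the stationary distribution. First, I will fix an arbitrary state-action pair $(s,a)\in\cS\times\cA$ and evaluate the $(s,a)$-entry of $\bar F(Q)$ by plugging in the explicit form \eqref{update_F}, obtaining
\begin{equation*}
\bar F(Q)(s,a) \;=\; \mathbb{E}_{y\sim\tilde\mu}\bigl[\mathbbm{1}_{\{(s_k,a_k)=(s,a)\}}\,\Gamma(Q,s_k,a_k,s_{k+1})\bigr] \;+\; Q(s,a),
\end{equation*}
since $Q(s,a)$ is deterministic and the indicator kills every contribution from other pairs.

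Next I would factor the expectation using the law of total expectation. Under the stationary distribution of the Markov chain $\{y_k\}$, the marginal probability of $(s_k,a_k)=(s,a)$ is exactly $p(s,a)$, and conditional on this event the distribution of $s_{k+1}$ is the MDP transition kernel $P(\cdot\mid s,a)$ (this is where Assumption \ref{asm_2} and the definition of a trajectory sampled from the behavior policy come in, making the stationary distribution consistent with the one-step MDP dynamics). Substituting the definition of $\Gamma$ then gives
\begin{equation*}
\mathbb{E}_{y\sim\tilde\mu}\bigl[\mathbbm{1}_{\{(s_k,a_k)=(s,a)\}}\,\Gamma(Q,s_k,a_k,s_{k+1})\bigr]
= p(s,a)\Bigl[r(s,a)+\gamma\,\mathbb{E}_{s'\sim P(\cdot\mid s,a)}\max_{a'}Q(s',a') - Q(s,a)\Bigr],
\end{equation*}
and the bracketed expression is exactly $[\mathcal{T}(Q)](s,a)-Q(s,a)$ by the definition of the Bellman operator.

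Combining these two displays entrywise yields $\bar F(Q)(s,a)=p(s,a)\,[\mathcal{T}(Q)](s,a)+(1-p(s,a))\,Q(s,a)$. Since $D$ is the diagonal matrix with entries $p(s,a)$, this identity rewrites in matrix form as $\bar F(Q)=D\mathcal{T}(Q)+(I-D)Q$, which is the desired conclusion.

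There is no serious obstacle here; the only subtle point worth stating cleanly in the write-up is the justification that, under the stationary distribution of $y_k=(s_k,a_k,s_{k+1})$, the conditional law of $s_{k+1}$ given $(s_k,a_k)=(s,a)$ is $P(\cdot\mid s,a)$. This is an immediate consequence of how the trajectory is generated by the MDP under the behavior policy, so the ``proof'' is essentially bookkeeping once this is noted.
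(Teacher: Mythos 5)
Your computation is correct: taking the $(s,a)$-entry of $\mathbb{E}_{y\sim\tilde\mu}[F(Q,y)]$, using that under the stationary law the marginal of $(s_k,a_k)$ is $p(s,a)$ and the conditional law of $s_{k+1}$ is $P(\cdot\mid s,a)$, gives $p(s,a)[\mathcal{T}(Q)](s,a)+(1-p(s,a))Q(s,a)$, which is the claimed matrix identity. The paper does not prove this lemma itself but imports it as Proposition~3.1 of \cite{chen2021lyapunov}, and your entrywise expectation argument is precisely the standard bookkeeping behind that cited result, so there is nothing to add.
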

We denote the Markov chain mixing time at iteration $k$ as $t_k$. Formally, the mixing time $t_k$ of the Markov chain $\mathcal{M}$ is defined as $t_k := \min \{ i\geq0: \max_{s\in\tilde{S}} \|\tilde{P}^{i}(s,\cdot) - \tilde{\mu}(\cdot) \|_{\text{TV}} \leq \alpha_k \}$.

\section{Main Results}
\label{sec_main}

In this section, we present our main results. We first establish a non-asymptotic CLT for the averaged Q-learning iterates, providing an explicit rate at which their distribution approaches a normal distribution. The deviation is measured by using the 1-Wasserstein distance. We then derive a functional central limit theorem (FCLT), showing that the partial-sum process converges weakly to a Brownian motion.

\subsection{Non-Asymptotic Central Limit Theorem}

Let $\Delta_k = Q_k - Q^*$. Our goal is to study the rate at which $\frac{1}{\sqrt{K}}\sum_{k=1}^K \Delta_k$ converges in distribution to normality. We present the main result as follows, where we use big $O$ notation to hide all constants.
\begin{theorem}
\label{thm_CLT}
Let $\alpha_k=\alpha(k+b)^{-\beta}$ for some constants $\alpha,b>0$ and $\beta \in(0.5,1)$. Under Assumption \ref{asm_1} and \ref{asm_2}, we have the following rate of convergence 
\begin{align*}
\mathcal{W}_1 \left( K^{-\frac{1}{2}} \sum_{k=1}^K\Delta_{k}, \tilde{\mathcal{N}} \right) &\leq \frac{({|\mathcal{S}||\mathcal{A}|})^{\frac{1}{2}}}{\rho(1-\gamma)^2{K}^{\frac{1}{2}}} \cdot \tilde{O}\left( (\rho(1-\gamma))^{\frac{\beta-2}{1-\beta}} + K^{\beta/2}\rho^{-1}(1-\gamma)^{-1}  \right.\\
&\left. \indent + K^{1-\beta} + K^{\frac{1-\beta}{2}}\rho^{-1-\beta}(1-\gamma)^{-\beta} \right)
\end{align*}
where $\tilde{\mathcal{N}}=(A^{-1}\Sigma A^{-\top})^{1/2}\mathcal{N}(0,I)$, $A=D-\gamma DP^{\pi^*}$, $\Sigma := \sum_{i,j\in \tilde{S}} \tilde{\mu}(i)  \tilde{P}(i,j) (X(j) - \mathbb{E}[X(Y_1)|Y_0=i])(X(j) - \mathbb{E}[X(Y_1)|Y_0=i])^{\top}$ and $X$ is the solution to a Poisson's equation.
\end{theorem}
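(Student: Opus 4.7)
The plan is to follow a Polyak--Juditsky-style linearization argument, adapted to nonlinear SA with Markovian noise and quantified in 1-Wasserstein distance. First I would linearize the recursion around $Q^*$. Writing $\mathcal{T}(Q_k) - \mathcal{T}(Q^*) = \gamma P^{\pi^*}\Delta_k + \gamma(P^{\pi_k} - P^{\pi^*})Q_k$ with $\pi_k$ the greedy policy w.r.t.\ $Q_k$, one observes that $(P^{\pi_k} - P^{\pi^*})Q_k \geq 0$ componentwise while $(P^{\pi_k} - P^{\pi^*})Q^* \leq 0$ componentwise; their componentwise difference is $(P^{\pi_k} - P^{\pi^*})(Q_k - Q^*)$, which is $\ell^\infty$-bounded by $L\|\Delta_k\|_\infty^2$ under Assumption~\ref{asm_1}. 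Since the two terms have opposite signs, each is individually bounded by $L\|\Delta_k\|_\infty^2$, and together with the lemma this gives $\bar F(Q_k) - Q_k = -A\Delta_k + B_k$ with $A = D - \gamma D P^{\pi^*}$ and $\|B_k\|_\infty = O(\|\Delta_k\|_\infty^2)$. The update reduces to
\[
\Delta_{k+1} = (I - \alpha_k A)\Delta_k + \alpha_k B_k + \alpha_k \xi_k,\qquad \xi_k := F(Q_k, y_k) - \bar F(Q_k).
\]

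Next I would apply Abel summation to the rearrangement $A\Delta_k = -(\Delta_{k+1} - \Delta_k)/\alpha_k + B_k + \xi_k$, obtaining
\[
A\sum_{k=1}^K \Delta_k = \frac{\Delta_1}{\alpha_1} - \frac{\Delta_{K+1}}{\alpha_K} - \sum_{k=2}^K \Delta_k\Bigl(\tfrac{1}{\alpha_{k-1}} - \tfrac{1}{\alpha_k}\Bigr) + \sum_{k=1}^K B_k + \sum_{k=1}^K \xi_k.
\]
Multiplying by $A^{-1}/\sqrt K$ separates $K^{-1/2}\sum_k \Delta_k$ into a principal noise term $A^{-1} K^{-1/2}\sum_k \xi_k$ plus several error pieces. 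I would bound each error piece using the finite-sample mean-square rate $\mathbb{E}\|\Delta_k\|_\infty^2 = \tilde O(\alpha_k/(\rho(1-\gamma)^2))$ for asynchronous Q-learning, valid after a burn-in of order $(\rho(1-\gamma))^{(\beta-2)/(1-\beta)}$. With $\alpha_k = \alpha(k+b)^{-\beta}$, the burn-in contribution produces the first summand in the theorem; the boundary term $\Delta_{K+1}/(\alpha_K\sqrt K)$ produces the $K^{\beta/2}\rho^{-1}(1-\gamma)^{-1}$ summand; the quadratic bias sum $\sum_k\|B_k\|_\infty$ produces the $K^{1-\beta}$ summand; and the Abel weighted sum, using $\alpha_{k-1}^{-1} - \alpha_k^{-1} \asymp \beta/(k\alpha_k)$, produces the $K^{(1-\beta)/2}\rho^{-1-\beta}(1-\gamma)^{-\beta}$ summand.

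To handle the Markovian noise $K^{-1/2}\sum_k \xi_k$, I would invoke the Poisson equation: let $X$ solve $X - \tilde P X = F(Q^*, \cdot) - \bar F(Q^*)$, which is solvable under Assumption~\ref{asm_2}. Decompose
\[
\xi_k = \bigl(X(y_{k+1}) - \tilde P X(y_k)\bigr) - \bigl(X(y_{k+1}) - X(y_k)\bigr) + \underbrace{\bigl[F(Q_k, y_k) - F(Q^*, y_k)\bigr] - \bigl[\bar F(Q_k) - \bar F(Q^*)\bigr]}_{=:\zeta_k},
\]
so that $\xi_k$ splits into a martingale difference, a telescoping term, and a residue $\zeta_k = O(\|\Delta_k\|_\infty)$. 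The telescope yields a boundary error absorbed above, and the residue is controlled by the same mean-square rate. The dominant scaled martingale sum $A^{-1}K^{-1/2}\sum_k(X(y_{k+1}) - \tilde P X(y_k))$ converges to $\tilde{\mathcal N}$ with covariance $A^{-1}\Sigma A^{-\top}$; a multivariate quantitative CLT (e.g.\ Stein's method for martingales, or a smoothed Berry--Esseen bound) delivers its $\mathcal W_1$ rate, which accounts for the overall $(|\mathcal S||\mathcal A|)^{1/2}/(\rho(1-\gamma)^2 K^{1/2})$ prefactor.

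I expect the main obstacle to be obtaining this sharp $\mathcal W_1$ rate for the vector-valued Markovian martingale with the correct joint dependence on $|\mathcal S||\mathcal A|$, $\rho$, and $1-\gamma$: the Poisson-equation solution $X$ scales with the mixing time (hence with $\rho^{-1}$), the operator norm $\|A^{-1}\|$ scales with $(1-\gamma)^{-1}$, and the dimension factor $(|\mathcal S||\mathcal A|)^{1/2}$ arises from the covariance trace in any multivariate Stein bound. One must also couple the $Q_k$-dependent residue $\zeta_k$ with the principal martingale sum carefully so as not to lose the $\sqrt K$ scaling. Once the individual error bounds and the quantitative Gaussian approximation are in place, the triangle inequality for $\mathcal W_1$ (using $\mathcal W_1(X + Y, Z) \leq \mathcal W_1(X, Z) + \mathbb E\|Y\|$) assembles them into the claimed bound.
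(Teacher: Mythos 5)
Your overall architecture --- linearize around $Q^*$, isolate a principal noise sum to be handled by a quantitative martingale CLT, bound the remaining pieces in $L^1$, and assemble with $\mathcal{W}_1(X+Y,Z)\leq \mathcal{W}_1(X,Z)+\mathbb{E}\|Y\|$ --- is the same as the paper's, and two of your choices are legitimate variants. The componentwise sign argument ($(P^{\pi_k}-P^{\pi^*})Q_k\geq 0$, $(P^{\pi_k}-P^{\pi^*})Q^*\leq 0$, difference controlled by Assumption \ref{asm_1}) bounds the nonlinearity directly and would let you dispense with the sandwich of Lemma \ref{lem_1}; and Abel summation is equivalent bookkeeping to the paper's product weights $\Psi_i^K$ together with the bounds on $\Psi_i^K-A^{-1}$.

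The genuine gap is in the treatment of the Markovian noise. You anchor the Poisson equation at $Q^*$ and push all iterate dependence into the residue $\zeta_k=[F(Q_k,y_k)-F(Q^*,y_k)]-[\bar{F}(Q_k)-\bar{F}(Q^*)]$. This residue is not a martingale difference (conditionally on the past, $y_k\sim\tilde{P}(y_{k-1},\cdot)$, not $\tilde{\mu}$), and it is too large to be ``controlled by the same mean-square rate'': $\mathbb{E}\|\zeta_k\|_\infty\asymp \mathbb{E}\|\Delta_k\|_\infty$, which decays like $k^{-\beta/2}$ under your assumed rate (or $k^{-1/2}$ under Lemma \ref{lem_t4}), so $K^{-1/2}\sum_{k\leq K}\mathbb{E}\|\zeta_k\|_\infty$ is at best $O(1)$ and generically grows like $K^{(1-\beta)/2}$; no $L^1$ bound on $\zeta_k$ can yield the claimed vanishing rate, and the ``careful coupling'' you flag as needed is exactly the missing idea. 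The paper avoids the residue altogether by solving the Poisson equation at the current iterate: $X_k$ is defined from $F(Q_k,\cdot)$, so the martingale part is exactly $X_k(Y_{k+1})-\mathbb{E}[X_k(Y_{k+1})\mid Y_k]$, and the only leftovers are a boundary term and the telescoping drift $X_{k+1}(Y_{k+1})-X_k(Y_{k+1})$, which by the Lipschitzness of $F$ (Lemma \ref{lem_lipF}) is $O(\|Q_{k+1}-Q_k\|_\infty)=O(\alpha_k)$ and sums to $O(K^{1-\beta})=o(\sqrt{K})$ precisely because $\beta>1/2$; this is the true origin of the $K^{1-\beta}$ summand (the quadratic bias term only contributes logarithmically after the $K^{-1/2}$ prefactor, by Lemma \ref{lem_t4}). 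The price of the time-varying $X_k$ is that the martingale CLT must be applied with iterate-dependent conditional covariances $A^{-1}\tilde{\Sigma}_kA^{-\top}$ converging to $A^{-1}\Sigma A^{-\top}$, which the Srikant-type bound absorbs through its trace term (Lemma \ref{lem_mds}) and which produces the $K^{-\beta/2}$ contribution. If you insist on keeping $X$ anchored at $Q^*$, you would need a second Poisson-type correction for $\zeta_k$ whose solution is $O(\|\Delta_k\|_\infty)$ and whose increments are stepsize-sized --- i.e., you would be reconstructing the paper's $X_k$.
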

We now derived a non-asymptotic CLT showing that the distribution of the algorithm's average error converges towards a normal distribution. The asymptotic covariance matrix $\Sigma$ describes the variance in the learning process that comes from sampling transitions from the environment. The asynchronous Q-learning updates are noisy because they are based on single transition samples, which is not IID. The matrix $\Sigma$ quantifies the long-term structure of this randomness. The parameter $\rho$ quantifies the quality of exploration. Recall that $\tilde{u}, \tilde{P},$ and $\tilde{S}$ are the stationary distribution, state-space, and transition kernel of the Markov chain $\mathcal{M}$. We define $X: \tilde{S}\rightarrow \mathbb{R}^{|\mathcal{S}|\times|\mathcal{A}|}$ to be the solution of the Poisson's equation: $F(Q^*, i) - \mathbb{E}[F(Q^*, i)] = X(i) - \mathbb{E}[X(Y_1)| Y_0=i] \ \ \forall \ \ i\in \tilde{S}$.

The stepsize in the Q-learning update chosen in this work is $\alpha_k=\alpha(k+b)^{-\beta}$ for two reasons. First, convergence of stochastic approximation with averaging schemes relies on several key conditions \citep{polyak1992acceleration,li2023statistical}: (i) $0\leq \sup_k \alpha_k \leq1$, $\alpha_k \downarrow 0$ and $k\alpha_k \uparrow \infty$; (ii) $\frac{\alpha_{k-1}-\alpha_k}{\alpha_{k-1}}=o(\alpha_{k-1})$; (iii) $\frac{1}{\sqrt{K}}\sum_{k=0}^K \alpha_k \rightarrow 0$; (iv) $\frac{\sum_{k=0}^K \alpha_k}{K\alpha_K}\leq C$. Constant stepsizes violate conditions (i) and (iii), while linear stepsizes violate condition (ii). By contrast, polynomial stepsizes satisfy all of the above. Second, the problem-dependent constants $\alpha$ and $b$ are crucial for establishing a finite-sample convergence guarantee \citep{chen2021lyapunov}, which we leverage in our analysis. The parameter $\alpha$ acts as a scaling factor for balancing the trade-off between the speed of convergence and the final error of the algorithm. The parameter $b$ is used to control the magnitude of the initial stepsizes and ensure the stability of the algorithm during the early stages.
Setting $\beta=2/3$, the rate of convergence can be simplified as follows.
\begin{corollary}
Under Assumption \ref{asm_1}, \ref{asm_2}, and with $\alpha_k=\alpha(k+b)^{-\frac{2}{3}}$, $K\geq (\rho(1-\gamma))^{-12}$, we have 
\begin{align*}
&\mathcal{W}_1 \left( K^{-\frac{1}{2}} \sum_{k=1}^K\Delta_{k}, (A^{-1}\Sigma A^{-\top})^{1/2}\mathcal{N}(0,I) \right) \leq \tilde{O}\left( \frac{({|\mathcal{S}||\mathcal{A}|)^{\frac{1}{2}}}}{K^{\frac{1}{6}}\rho^2(1-\gamma)^3} \right).
\end{align*}
\end{corollary}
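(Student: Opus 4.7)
The plan is to substitute $\beta = 2/3$ directly into the bound of Theorem~\ref{thm_CLT} and then use the hypothesis $K \geq (\rho(1-\gamma))^{-12}$ to isolate the dominant term inside the $\tilde{O}(\cdot)$. The corollary is essentially a bookkeeping exercise on the four-term bracket in the theorem, with the $K$-threshold precisely tuned to absorb the only term that does not depend on $K$.

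First, I would evaluate each of the four exponents at $\beta = 2/3$: $(\beta-2)/(1-\beta) = -4$, $\beta/2 = 1/3$, $1-\beta = 1/3$, and $(1-\beta)/2 = 1/6$. The four summands inside the $\tilde{O}(\cdot)$ then become
\begin{align*}
T_1 &= (\rho(1-\gamma))^{-4}, & T_2 &= K^{1/3}\rho^{-1}(1-\gamma)^{-1}, \\
T_3 &= K^{1/3}, & T_4 &= K^{1/6}\rho^{-5/3}(1-\gamma)^{-2/3}.
\end{align*}

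Second, I would invoke $K \geq (\rho(1-\gamma))^{-12}$, which rewrites as $K^{1/3} \geq (\rho(1-\gamma))^{-4}$, to conclude $T_1 \leq T_3$ so that $T_1$ is absorbed. Among the remaining terms, since $\rho, 1-\gamma \in (0,1]$, we trivially have $T_2 \geq T_3$. For $T_2$ versus $T_4$, the ratio is $T_2/T_4 = K^{1/6}\rho^{2/3}(1-\gamma)^{-1/3}$; the threshold $K^{1/6} \geq \rho^{-2}(1-\gamma)^{-2}$ yields $T_2/T_4 \geq \rho^{-4/3}(1-\gamma)^{-7/3} \geq 1$, so $T_2$ again dominates. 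Hence the bracket collapses to $\tilde{O}(T_2) = \tilde{O}(K^{1/3}\rho^{-1}(1-\gamma)^{-1})$.

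Finally, I would multiply this dominant term by the prefactor $(|\mathcal{S}||\mathcal{A}|)^{1/2}/(\rho(1-\gamma)^2 K^{1/2})$ and combine exponents to obtain
\begin{align*}
\tilde{O}\left( \frac{(|\mathcal{S}||\mathcal{A}|)^{1/2}}{\rho(1-\gamma)^2 K^{1/2}} \cdot K^{1/3}\rho^{-1}(1-\gamma)^{-1} \right) = \tilde{O}\left( \frac{(|\mathcal{S}||\mathcal{A}|)^{1/2}}{K^{1/6}\rho^2(1-\gamma)^3} \right),
\end{align*}
which is exactly the claim. There is no real obstacle here: the one thing to notice is that $\beta = 2/3$ is chosen so that the $K^{1-\beta}$ and $K^{\beta/2}$ rates both coincide at $K^{1/3}$ (producing the final $K^{-1/6}$ after the $K^{-1/2}$ prefactor), and the stated threshold on $K$ is precisely what is needed to kill the $K$-independent term $T_1$.
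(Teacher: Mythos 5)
Your proof is correct, and since the paper gives no explicit proof for this corollary, a direct substitution of $\beta=2/3$ into Theorem~\ref{thm_CLT} followed by using the $K$-threshold to isolate the dominant term is exactly the intended argument. Your exponent arithmetic, the identification of $T_2 = K^{1/3}\rho^{-1}(1-\gamma)^{-1}$ as dominant under the hypothesis $K\geq(\rho(1-\gamma))^{-12}$, and the final combination with the prefactor $(|\mathcal S||\mathcal A|)^{1/2}/\bigl(\rho(1-\gamma)^2 K^{1/2}\bigr)$ all check out.

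Two small remarks, neither a gap. First, the stated threshold is more than sufficient for your $T_2$-versus-$T_4$ comparison: $T_2/T_4 = K^{1/6}\rho^{2/3}(1-\gamma)^{-1/3}\geq 1$ already whenever $K^{1/6}\geq\rho^{-2/3}$, so the full $(\rho(1-\gamma))^{-12}$ bound only matters for absorbing the $K$-free term $T_1$. Second, comparing $T_1$ directly to $T_2$ (rather than via $T_3$) only requires $K\geq(\rho(1-\gamma))^{-9}$, so the corollary's threshold $K\geq(\rho(1-\gamma))^{-12}$ is in fact what falls out of your particular chain $T_1\leq T_3\leq T_2$; this suggests your route mirrors what the author had in mind when choosing the exponent $12$.
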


\subsection{Proof Sketch of Theorem \ref{thm_CLT}}

The proof of the non-asymptotic central limit theorem for asynchronous averaged Q-learning addresses the core challenges of nonlinearity, the non-smoothness of the empirical Bellman operator, and non-IID Markovian noises. The proof contains five main steps.

\paragraph{Step 1: Constructing Bounding Processes} Note that the empirical Bellman operator in asynchronous Q-learning is non-smooth due to the $\max$ operator and the indicator function. The proof introduces upper and lower bounding processes, denoted as $\Delta_k^\uparrow$ and $\Delta_k^\downarrow$, to track the error $\Delta_k = Q_k - Q^*$. By carefully bounding the update steps using the properties of the greedy and optimal policies, it is established by induction that $\Delta_k^\downarrow \le \Delta_k \le \Delta_k^\uparrow$ holds for all $k \in [K]$.

\paragraph{Step 2: Recursive Error Decomposition} We first analyze the upper bounding process, where the accumulated error sum $\sum_{k=1}^K \Delta_k^\uparrow$ is expanded recursively. We decompose the sum into five primary terms. These terms isolate distinct sources of error: the initialization bias, the smooth temporal difference errors, and the stochastic noise term.

\paragraph{Step 3: Handling Markovian Noise via Poisson Equation} Asynchronous updates rely on single transition samples, which introduces Markovian noise into the sequence. To address this lack of independence, the proof leverages the Poisson equation technique \citep{glynn1996liapounov, douc2018markov, makowski2002poisson, chen2020explicit}. By applying the solution to the Poisson equation, the non-stationary Markovian noise is rewritten as the sum of a bounded martingale difference sequence and a telescoping-like correction term. This cleanly isolates the true martingale noise required for the central limit theorem.

\paragraph{Step 4: Bounding the Residual Components} With the error decomposed, the non-martingale residual terms, such as the initialization decay and the Poisson equation correction terms, are bounded in the infinity norm. The analysis demonstrates that when these residual terms are scaled by $1/\sqrt{K}$, they vanish asymptotically at a fast enough rate and do not affect the limiting distribution.

\paragraph{Step 5: Applying the Martingale CLT and Sandwiching} After bounding all remainder terms, a non-asymptotic martingale central limit theorem is applied directly to the isolated martingale difference sequence. This establishes that the normalized sum of the upper bounding process converges to the normal distribution in the 1-Wasserstein distance. Finally, since the lower bounding process $\Delta_k^\downarrow$ follows an analogous decomposition and converges to the exact same distribution, a sandwich argument concludes that the true averaged Q-learning error $\frac{1}{\sqrt{K}}\sum_{k=1}^K \Delta_k$ inherits this identical convergence rate.

\subsection{Functional Central Limit Theorem}

The FCLT is an important extension to the conventional CLT. Donsker’s FCLT \citep{donsker1951invariance} states that the normalized partial sum process of i.i.d. random variables converges weakly to a Brownian motion in the Skorokhod space. In this section, we establish an FCLT for asynchronous Q-learning iterates, showing that the partial-sum process converges in distribution to a rescaled Brownian motion. Let $\mathcal{D}[0,1]$ denote the Skorokhod space. For $\zeta \in [0,1]$, we define the standardized partial sum processes associated with $\{ Q_k \}_{k\geq 1}$ as
\begin{align*}
\Phi_{K}(\zeta) = \frac{1}{\sqrt{K}}\sum_{k=1}^{\lfloor \zeta K \rfloor} \Delta_k = \frac{1}{\sqrt{K}}\sum_{k=1}^{\lfloor \zeta K \rfloor} (Q_k - Q^*).
\end{align*}

\begin{theorem}
\label{thm_FCLT}
Under the setting of Theorem \ref{thm_CLT}, the partial sum process $\Phi_{K}(\cdot)$ converges weakly to $(A^{-1}\Sigma A^{-\top})^{1/2} \textbf{B}(\cdot)$ on $\mathcal{D}[0,1]$, where $\textbf{B}(\cdot)$ is the standard Brownian motion on $[0,1]$.
\end{theorem}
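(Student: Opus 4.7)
The plan is to reduce the FCLT to a martingale invariance principle via a Poisson-equation decomposition, leveraging the same linearization that underlies the non-asymptotic CLT in Theorem \ref{thm_CLT}. The starting point is the error recursion
\begin{align*}
\Delta_{k+1} = \Delta_k - \alpha_k A \Delta_k + \alpha_k \eta_k + \alpha_k \xi_k,
\end{align*}
where $A = D - \gamma D P^{\pi^*}$ is the asymptotic drift, $\eta_k := F(Q^*,y_k) - \bar F(Q^*)$ captures the Markovian noise at the optimum, and $\xi_k$ collects the nonlinear remainder involving $(P^{\pi_k} - P^{\pi^*})(Q_k - Q^*)$ plus the ``frozen-coordinate'' perturbation $(F(Q_k,y_k) - F(Q^*,y_k)) - (\bar F(Q_k) - \bar F(Q^*))$. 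Using the solution $X$ of the Poisson equation for $\eta_k$, I would rewrite $\eta_k = M_k + (X(y_k) - X(y_{k+1}))$, where $M_k := X(y_{k+1}) - \mathbb{E}[X(y_{k+1})\mid y_k]$ is a martingale difference with covariance $\Sigma$.

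Next I would solve the recursion and sum, performing Abel summation as in Polyak--Juditsky to obtain the identity
\begin{align*}
\frac{1}{\sqrt{K}}\sum_{k=1}^{\lfloor \zeta K\rfloor} \Delta_k = A^{-1}\cdot\frac{1}{\sqrt{K}}\sum_{k=1}^{\lfloor \zeta K\rfloor} M_k + R_K(\zeta),
\end{align*}
where $R_K(\zeta)$ bundles together (a) the stepsize-weighted martingale telescope, (b) the telescoping terms $X(y_k) - X(y_{k+1})$, (c) the initial/boundary contribution $\alpha_k^{-1}(\Delta_k - \Delta_{k+1})$, and (d) the nonlinear remainder driven by $\xi_k$. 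The key analytical task is to show that $\sup_{\zeta \in [0,1]} \|R_K(\zeta)\|_\infty \to 0$ in probability. The first three pieces are controlled by maximal inequalities for martingales and by $\ell^\infty$ telescoping bounds on $X$, using the decaying stepsize conditions (i)--(iv) listed after Theorem \ref{thm_CLT}. The nonlinear remainder requires the moment bounds on $\|\Delta_k\|_\infty$ that underlie Theorem \ref{thm_CLT}, combined with Assumption \ref{asm_1}, which upgrades the non-smooth $\pi_k$-difference to a quadratic-in-$\Delta_k$ term; summing $\alpha_k \|\Delta_k\|_\infty^2$ and passing to the supremum over $\zeta$ yields the required vanishing bound.

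With $R_K$ negligible uniformly in $\zeta$, the FCLT reduces to an invariance principle for the triangular array $\{K^{-1/2} M_k\}$. Since $M_k$ is a stationary ergodic martingale difference sequence under Assumption \ref{asm_2} with finite covariance $\Sigma$, I would invoke the functional martingale CLT (e.g., Theorem 18.2 of Billingsley or its stationary-ergodic variant by Brown), which gives weak convergence of $\zeta \mapsto K^{-1/2}\sum_{k=1}^{\lfloor \zeta K\rfloor} M_k$ to $\Sigma^{1/2}\mathbf{B}(\cdot)$ on $\mathcal{D}[0,1]$. Pre-multiplying by the deterministic matrix $A^{-1}$ and applying the continuous mapping theorem yields the limit $(A^{-1}\Sigma A^{-\top})^{1/2}\mathbf{B}(\cdot)$. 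Tightness of $\Phi_K$ itself in the Skorokhod topology then follows from tightness of the martingale piece together with the uniform estimate on $R_K$.

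The main obstacle I anticipate is the uniform-in-$\zeta$ control of the nonlinear remainder in $R_K(\zeta)$. Unlike the pointwise CLT, the FCLT demands a sup-in-$\zeta$ bound, so the contribution $\sum_{k\le \lfloor \zeta K\rfloor} \alpha_k \|\Delta_k\|_\infty^2$ must be dominated by its value at $\zeta = 1$ and shown to be $o_P(\sqrt{K})$; this forces a careful accounting of the interplay between $\beta \in (0.5,1)$, the quadratic factor from Assumption \ref{asm_1}, and the Markovian mixing time $t_k$ through a union bound or a Doob-type maximal inequality over the partial sums. Once that uniform bound is in hand, the remaining steps are standard consequences of the martingale FCLT and continuous mapping.
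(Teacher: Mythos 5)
There is a genuine gap in how you handle the noise perturbation created by freezing the Poisson equation at $Q^*$. Writing $F_k-\bar F_k = \eta_k + \big[(F(Q_k,y_k)-F(Q^*,y_k))-(\bar F(Q_k)-\bar F(Q^*))\big]$ leaves a perturbation that, by the Lipschitz property of $F$ (Lemma \ref{lem_lipF}), is only of size $O(\|\Delta_k\|_\infty)$ — \emph{linear} in $\Delta_k$, not quadratic; Assumption \ref{asm_1} only upgrades the greedy-policy term $(P^{\pi_k}-P^{\pi^*})$ part. Your stated control, ``summing $\alpha_k\|\Delta_k\|_\infty^2$,'' therefore does not cover it. Moreover, after the Abel/Polyak--Juditsky summation the remainders enter the averaged sum with weights $\Psi_i^K\approx A^{-1}$, which are $O(1)$, not $O(\alpha_i)$; so a crude norm bound on this perturbation contributes $\sum_{k\le K}O(\|\Delta_k\|_\infty)\asymp \sqrt{K}$ (using $\|\Delta_k\|_\infty \asymp k^{-1/2}$ from Lemma \ref{lem_t4}), which is exactly the CLT scale and does \emph{not} vanish after dividing by $\sqrt{K}$. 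To close this you either need a second Poisson/martingale decomposition of the perturbation itself (exploiting that it is a Markovian, stationary-mean-zero term of magnitude $\|\Delta_k\|_\infty$, so its martingale part has variance $\sum_k\|\Delta_k\|_\infty^2=O(\log K)$), or you should do what the paper does: use the $k$-dependent Poisson solution $X_k$ associated with the noise at the current iterate $Q_k$, so the residual is $X_{k+1}(Y_{k+1})-X_k(Y_{k+1})$, which by Lipschitzness is $O(\|Q_{k+1}-Q_k\|_\infty)=O(\alpha_k)$ and sums to $O(K^{1-\beta})=o(\sqrt{K})$ for $\beta>1/2$ (Terms (3b) and (5b) in the paper). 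Two smaller inaccuracies: the exact recursion's nonsmooth remainder is $\gamma D(P^{\pi_k}-P^{\pi^*})Q_k$, not $(P^{\pi_k}-P^{\pi^*})(Q_k-Q^*)$, so you must either add the sign argument ($ (P^{\pi_k}-P^{\pi^*})Q_k\ge 0$ and $(P^{\pi_k}-P^{\pi^*})Q^*\le 0$, which together with Assumption \ref{asm_1} makes the whole term $O(L\|\Delta_k\|_\infty^2)$) or follow the paper's sandwich construction; and the martingale differences $M_k$ are not stationary (the chain is not started at $\tilde\mu$), so you should invoke the Hall--Heyde/Brown FCLT via convergence of the averaged conditional covariances (ergodic theorem) plus bounded increments rather than a stationary-ergodic version.

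Apart from this, your architecture mirrors the paper's: the paper decomposes $\sum_{k\le \lfloor\zeta K\rfloor}\Delta_k^{\uparrow}$ (and $\Delta_k^{\downarrow}$) into the six components $\phi_1(\zeta),\dots,\phi_6(\zeta)$ from the CLT proof, shows $\sup_{\zeta}\|K^{-1/2}\phi_i(\zeta)\|_\infty=o(1)$ for $i\le 5$, applies a martingale FCLT (Theorem 4.2 of Hall--Heyde) to $\phi_6$, combines them via Proposition \ref{prop_conv}, and finally transfers the result from $\Delta^{\uparrow},\Delta^{\downarrow}$ to $\Delta_k$ using the sandwich inequality of Lemma \ref{lem_1}. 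Your direct treatment of the exact recursion is a legitimate alternative to the sandwich step, but only once the linear perturbation issue above is repaired.
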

We can see that the conventional CLT is a special case of the FCLT when $\zeta=1$. As the FCLT provides a basis for the asymptotic normality of certain functionals of stochastic processes, it is important for uncertainty quantification and statistical inference in Q-learning. Previous works have established the FCLT for synchronous Q-learning \citep{xie2022statistical, li2023statistical, panda2025asymptotic}. A recent work \citep{zhang2024constant} established a FCLT for asynchronous Q-learning with a constant step size. In contrast, our result concerns diminishing step-sizes.

\section{Conclusion}

We present a non-asymptotic central limit theorem for asynchronous averaged Q-learning, showing that the averaged iterate converges to a normal distribution in the Wasserstein distance at a rate of $\tilde{O}\left( {({|\mathcal{S}||\mathcal{A}|})^{\frac{1}{2}}}{K^{-\frac{1}{6}}\rho^{-2}(1-\gamma)^{-3}} \right)$. We also derive a functional CLT, showing weak convergence of the partial-sum process to a Brownian motion. Compared with linear stochastic approximation and TD learning, the analysis of Q-learning poses additional challenges due to its nonlinearity and the non-stationarity of the process. Asynchronous updates further complicate the problem by introducing Markovian noise. This work identifies and addresses all of these challenges to provide the first non-asymptotic CLT for Q-learning. An important future direction is to strengthen the convergence rate and to extend the results to other metrics beyond the 1-Wasserstein distance.

\bibliography{ref}

\begin{thebibliography}{43}
\providecommand{\natexlab}[1]{#1}
\providecommand{\url}[1]{\texttt{#1}}
\expandafter\ifx\csname urlstyle\endcsname\relax
  \providecommand{\doi}[1]{doi: #1}\else
  \providecommand{\doi}{doi: \begingroup \urlstyle{rm}\Url}\fi

\bibitem[Anastasiou et~al.(2019)Anastasiou, Balasubramanian, and
  Erdogdu]{anastasiou2019normal}
Andreas Anastasiou, Krishnakumar Balasubramanian, and Murat~A Erdogdu.
\newblock Normal approximation for stochastic gradient descent via
  non-asymptotic rates of martingale clt.
\newblock In \emph{Conference on Learning Theory}, pages 115--137. PMLR, 2019.

\bibitem[Benveniste et~al.(2012)Benveniste, M{\'e}tivier, and
  Priouret]{benveniste2012adaptive}
Albert Benveniste, Michel M{\'e}tivier, and Pierre Priouret.
\newblock \emph{Adaptive algorithms and stochastic approximations}, volume~22.
\newblock Springer Science \& Business Media, 2012.

\bibitem[Bhandari et~al.(2018)Bhandari, Russo, and Singal]{bhandari2018finite}
Jalaj Bhandari, Daniel Russo, and Raghav Singal.
\newblock A finite time analysis of temporal difference learning with linear
  function approximation.
\newblock In \emph{Conference on learning theory}, pages 1691--1692. PMLR,
  2018.

\bibitem[Borkar et~al.(2025)Borkar, Chen, Devraj, Kontoyiannis, and
  Meyn]{borkar2025ode}
Vivek Borkar, Shuhang Chen, Adithya Devraj, Ioannis Kontoyiannis, and Sean
  Meyn.
\newblock The ode method for asymptotic statistics in stochastic approximation
  and reinforcement learning.
\newblock \emph{The Annals of Applied Probability}, 35\penalty0 (2):\penalty0
  936--982, 2025.

\bibitem[Borkar and Borkar(2008)]{borkar2008stochastic}
Vivek~S Borkar and Vivek~S Borkar.
\newblock \emph{Stochastic approximation: a dynamical systems viewpoint},
  volume~9.
\newblock Springer, 2008.

\bibitem[Chandak(2025)]{chandak20251}
Siddharth Chandak.
\newblock $ o (1/k) $ finite-time bound for non-linear two-time-scale
  stochastic approximation.
\newblock \emph{arXiv preprint arXiv:2504.19375}, 2025.

\bibitem[Chandak et~al.(2025)Chandak, Haque, and Bambos]{chandak2025finite}
Siddharth Chandak, Shaan~Ul Haque, and Nicholas Bambos.
\newblock Finite-time bounds for two-time-scale stochastic approximation with
  arbitrary norm contractions and markovian noise.
\newblock \emph{arXiv preprint arXiv:2503.18391}, 2025.

\bibitem[Chen et~al.(2020{\natexlab{a}})Chen, Devraj, Busic, and
  Meyn]{chen2020explicit}
Shuhang Chen, Adithya Devraj, Ana Busic, and Sean Meyn.
\newblock Explicit mean-square error bounds for monte-carlo and linear
  stochastic approximation.
\newblock In \emph{International Conference on Artificial Intelligence and
  Statistics}, pages 4173--4183. PMLR, 2020{\natexlab{a}}.

\bibitem[Chen et~al.(2020{\natexlab{b}})Chen, Maguluri, Shakkottai, and
  Shanmugam]{chen2020finite}
Zaiwei Chen, Siva~Theja Maguluri, Sanjay Shakkottai, and Karthikeyan Shanmugam.
\newblock Finite-sample analysis of contractive stochastic approximation using
  smooth convex envelopes.
\newblock \emph{Advances in Neural Information Processing Systems},
  33:\penalty0 8223--8234, 2020{\natexlab{b}}.

\bibitem[Chen et~al.(2021)Chen, Maguluri, Shakkottai, and
  Shanmugam]{chen2021lyapunov}
Zaiwei Chen, Siva~Theja Maguluri, Sanjay Shakkottai, and Karthikeyan Shanmugam.
\newblock A lyapunov theory for finite-sample guarantees of asynchronous
  q-learning and td-learning variants.
\newblock \emph{arXiv preprint arXiv:2102.01567}, 2021.

\bibitem[Chen et~al.(2022)Chen, Zhang, Doan, Clarke, and
  Maguluri]{chen2022finite}
Zaiwei Chen, Sheng Zhang, Thinh~T Doan, John-Paul Clarke, and Siva~Theja
  Maguluri.
\newblock Finite-sample analysis of nonlinear stochastic approximation with
  applications in reinforcement learning.
\newblock \emph{Automatica}, 146:\penalty0 110623, 2022.

\bibitem[Donsker(1951)]{donsker1951invariance}
Monroe~David Donsker.
\newblock \emph{An invariance principle for certain probability limit
  theorems}.
\newblock 1951.

\bibitem[Douc et~al.(2018)Douc, Moulines, Priouret, Soulier, Douc, Moulines,
  Priouret, and Soulier]{douc2018markov}
Randal Douc, Eric Moulines, Pierre Priouret, Philippe Soulier, Randal Douc,
  Eric Moulines, Pierre Priouret, and Philippe Soulier.
\newblock \emph{Markov chains: Basic definitions}.
\newblock Springer, 2018.

\bibitem[Glynn and Meyn(1996)]{glynn1996liapounov}
Peter~W Glynn and Sean~P Meyn.
\newblock A liapounov bound for solutions of the poisson equation.
\newblock \emph{The Annals of Probability}, pages 916--931, 1996.

\bibitem[Hall and Heyde(2014)]{hall2014martingale}
Peter Hall and Christopher~C Heyde.
\newblock \emph{Martingale limit theory and its application}.
\newblock Academic press, 2014.

\bibitem[Han et~al.(2024)Han, Li, Liang, and Zhang]{han2024decoupled}
Yuze Han, Xiang Li, Jiadong Liang, and Zhihua Zhang.
\newblock Decoupled functional central limit theorems for two-time-scale
  stochastic approximation.
\newblock \emph{arXiv preprint arXiv:2412.17070}, 2024.

\bibitem[Hu et~al.(2024)Hu, Doshi, et~al.]{hu2024central}
Jie Hu, Vishwaraj Doshi, et~al.
\newblock Central limit theorem for two-timescale stochastic approximation with
  markovian noise: Theory and applications.
\newblock In \emph{International Conference on Artificial Intelligence and
  Statistics}, pages 1477--1485. PMLR, 2024.

\bibitem[Khodadadian and Zubeldia(2025)]{khodadadian2025general}
Sajad Khodadadian and Martin Zubeldia.
\newblock A general-purpose theorem for high-probability bounds of stochastic
  approximation with polyak averaging.
\newblock \emph{arXiv preprint arXiv:2505.21796}, 2025.

\bibitem[Kong et~al.(2025)Kong, Zeng, Doan, and Srikant]{kong2025nonasymptotic}
Seo~Taek Kong, Sihan Zeng, Thinh~T Doan, and R~Srikant.
\newblock Nonasymptotic clt and error bounds for two-time-scale stochastic
  approximation.
\newblock \emph{arXiv preprint arXiv:2502.09884}, 2025.

\bibitem[Li et~al.(2020)Li, Wei, Chi, Gu, and Chen]{li2020sample}
Gen Li, Yuting Wei, Yuejie Chi, Yuantao Gu, and Yuxin Chen.
\newblock Sample complexity of asynchronous q-learning: Sharper analysis and
  variance reduction.
\newblock \emph{Advances in neural information processing systems},
  33:\penalty0 7031--7043, 2020.

\bibitem[Li et~al.(2023)Li, Yang, Liang, Zhang, and Jordan]{li2023statistical}
Xiang Li, Wenhao Yang, Jiadong Liang, Zhihua Zhang, and Michael~I Jordan.
\newblock A statistical analysis of polyak-ruppert averaged q-learning.
\newblock In \emph{International Conference on Artificial Intelligence and
  Statistics}, pages 2207--2261. PMLR, 2023.

\bibitem[Makowski and Shwartz(2002)]{makowski2002poisson}
Armand~M Makowski and Adam Shwartz.
\newblock The poisson equation for countable markov chains: probabilistic
  methods and interpretations.
\newblock In \emph{Handbook of Markov Decision Processes: Methods and
  Applications}, pages 269--303. Springer, 2002.

\bibitem[Mnih et~al.(2015)Mnih, Kavukcuoglu, Silver, Rusu, Veness, Bellemare,
  Graves, Riedmiller, Fidjeland, Ostrovski, et~al.]{mnih2015human}
Volodymyr Mnih, Koray Kavukcuoglu, David Silver, Andrei~A Rusu, Joel Veness,
  Marc~G Bellemare, Alex Graves, Martin Riedmiller, Andreas~K Fidjeland, Georg
  Ostrovski, et~al.
\newblock Human-level control through deep reinforcement learning.
\newblock \emph{nature}, 518\penalty0 (7540):\penalty0 529--533, 2015.

\bibitem[Mou et~al.(2020)Mou, Li, Wainwright, Bartlett, and
  Jordan]{mou2020linear}
Wenlong Mou, Chris~Junchi Li, Martin~J Wainwright, Peter~L Bartlett, and
  Michael~I Jordan.
\newblock On linear stochastic approximation: Fine-grained polyak-ruppert and
  non-asymptotic concentration.
\newblock In \emph{Conference on Learning Theory}, pages 2947--2997. PMLR,
  2020.

\bibitem[Ouyang et~al.(2022)Ouyang, Wu, Jiang, Almeida, Wainwright, Mishkin,
  Zhang, Agarwal, Slama, Ray, et~al.]{ouyang2022training}
Long Ouyang, Jeffrey Wu, Xu~Jiang, Diogo Almeida, Carroll Wainwright, Pamela
  Mishkin, Chong Zhang, Sandhini Agarwal, Katarina Slama, Alex Ray, et~al.
\newblock Training language models to follow instructions with human feedback.
\newblock \emph{Advances in neural information processing systems},
  35:\penalty0 27730--27744, 2022.

\bibitem[Panda et~al.(2025)Panda, Liu, and Xiang]{panda2025asymptotic}
Saunak~Kumar Panda, Ruiqi Liu, and Yisha Xiang.
\newblock Asymptotic analysis of sample-averaged q-learning.
\newblock \emph{IEEE Transactions on Information Theory}, 2025.

\bibitem[Polyak and Juditsky(1992)]{polyak1992acceleration}
Boris~T Polyak and Anatoli~B Juditsky.
\newblock Acceleration of stochastic approximation by averaging.
\newblock \emph{SIAM journal on control and optimization}, 30\penalty0
  (4):\penalty0 838--855, 1992.

\bibitem[Prokhorov(1956)]{prokhorov1956convergence}
Yu~V Prokhorov.
\newblock Convergence of random processes and limit theorems in probability
  theory.
\newblock \emph{Theory of Probability \& Its Applications}, 1\penalty0
  (2):\penalty0 157--214, 1956.

\bibitem[Qu and Wierman(2020)]{qu2020finite}
Guannan Qu and Adam Wierman.
\newblock Finite-time analysis of asynchronous stochastic approximation and $ q
  $-learning.
\newblock In \emph{Conference on Learning Theory}, pages 3185--3205. PMLR,
  2020.

\bibitem[Samsonov et~al.(2024)Samsonov, Moulines, Shao, Zhang, and
  Naumov]{samsonov2024gaussian}
Sergey Samsonov, Eric Moulines, Qi-Man Shao, Zhuo-Song Zhang, and Alexey
  Naumov.
\newblock Gaussian approximation and multiplier bootstrap for polyak-ruppert
  averaged linear stochastic approximation with applications to td learning.
\newblock In \emph{The Thirty-eighth Annual Conference on Neural Information
  Processing Systems}, 2024.

\bibitem[Samsonov et~al.(2025)Samsonov, Sheshukova, Moulines, and
  Naumov]{samsonov2025statistical}
Sergey Samsonov, Marina Sheshukova, Eric Moulines, and Alexey Naumov.
\newblock Statistical inference for linear stochastic approximation with
  markovian noise.
\newblock \emph{arXiv preprint arXiv:2505.19102}, 2025.

\bibitem[Shao et~al.(2024)Shao, Wang, Zhu, Xu, Song, Bi, Zhang, Zhang, Li, Wu,
  et~al.]{shao2024deepseekmath}
Zhihong Shao, Peiyi Wang, Qihao Zhu, Runxin Xu, Junxiao Song, Xiao Bi, Haowei
  Zhang, Mingchuan Zhang, YK~Li, Y~Wu, et~al.
\newblock Deepseekmath: Pushing the limits of mathematical reasoning in open
  language models.
\newblock \emph{arXiv preprint arXiv:2402.03300}, 2024.

\bibitem[Silver et~al.(2016)Silver, Huang, Maddison, Guez, Sifre, Van
  Den~Driessche, Schrittwieser, Antonoglou, Panneershelvam, Lanctot,
  et~al.]{silver2016mastering}
David Silver, Aja Huang, Chris~J Maddison, Arthur Guez, Laurent Sifre, George
  Van Den~Driessche, Julian Schrittwieser, Ioannis Antonoglou, Veda
  Panneershelvam, Marc Lanctot, et~al.
\newblock Mastering the game of go with deep neural networks and tree search.
\newblock \emph{nature}, 529\penalty0 (7587):\penalty0 484--489, 2016.

\bibitem[Srikant(2024)]{srikant2024rates}
R~Srikant.
\newblock Rates of convergence in the central limit theorem for markov chains,
  with an application to td learning.
\newblock \emph{arXiv preprint arXiv:2401.15719}, 2024.

\bibitem[Srikant and Ying(2019)]{srikant2019finite}
Rayadurgam Srikant and Lei Ying.
\newblock Finite-time error bounds for linear stochastic approximation andtd
  learning.
\newblock In \emph{Conference on Learning Theory}, pages 2803--2830. PMLR,
  2019.

\bibitem[Sutton and Barto(1998)]{sutton1998reinforcement}
Richard~S Sutton and Andrew~G Barto.
\newblock \emph{Reinforcement learning: An introduction}, volume~1.
\newblock MIT press Cambridge, 1998.

\bibitem[Tan et~al.(2018)Tan, Zhang, Coumans, Iscen, Bai, Hafner, Bohez, and
  Vanhoucke]{tan2018sim}
Jie Tan, Tingnan Zhang, Erwin Coumans, Atil Iscen, Yunfei Bai, Danijar Hafner,
  Steven Bohez, and Vincent Vanhoucke.
\newblock Sim-to-real: Learning agile locomotion for quadruped robots.
\newblock \emph{arXiv preprint arXiv:1804.10332}, 2018.

\bibitem[Wainwright(2019)]{Wai19CC}
M.~J. Wainwright.
\newblock Stochastic approximation with cone-contractive operators: Sharp
  $\ell_\infty$-bounds for q-learning.
\newblock Technical Report arxiv:1905.06265, UC Berkeley, May 2019.

\bibitem[Watkins and Dayan(1992)]{watkins1992q}
Christopher~JCH Watkins and Peter Dayan.
\newblock Q-learning.
\newblock \emph{Machine learning}, 8:\penalty0 279--292, 1992.

\bibitem[Xie and Zhang(2022)]{xie2022statistical}
Chuhan Xie and Zhihua Zhang.
\newblock A statistical online inference approach in averaged stochastic
  approximation.
\newblock \emph{Advances in Neural Information Processing Systems},
  35:\penalty0 8998--9009, 2022.

\bibitem[Zeng et~al.(2020)Zeng, Song, Lee, Rodriguez, and
  Funkhouser]{zeng2020tossingbot}
Andy Zeng, Shuran Song, Johnny Lee, Alberto Rodriguez, and Thomas Funkhouser.
\newblock Tossingbot: Learning to throw arbitrary objects with residual
  physics.
\newblock \emph{IEEE Transactions on Robotics}, 36\penalty0 (4):\penalty0
  1307--1319, 2020.

\bibitem[Zhang et~al.(2022)Zhang, Des~Combes, and Laroche]{zhang2022global}
Shangtong Zhang, Remi~Tachet Des~Combes, and Romain Laroche.
\newblock Global optimality and finite sample analysis of softmax off-policy
  actor critic under state distribution mismatch.
\newblock \emph{Journal of Machine Learning Research}, 23\penalty0
  (343):\penalty0 1--91, 2022.

\bibitem[Zhang and Xie(2024)]{zhang2024constant}
Yixuan Zhang and Qiaomin Xie.
\newblock Constant stepsize q-learning: Distributional convergence, bias and
  extrapolation.
\newblock \emph{arXiv preprint arXiv:2401.13884}, 2024.

\end{thebibliography}

\appendix

\newpage

\section{Proof of Theorem \ref{thm_CLT}}

We begin by establishing the following lemma. 

\subsection{Proof of Lemma \ref{lem_1}}

\begin{lemma}
\label{lem_1}
Denote $\Delta_k = Q_k - Q^*$. For all $k \in [K]$, if $\alpha_k \leq 1$, then $\Delta_k$ is bounded as follows:
\begin{align*}
\Delta^{\downarrow}_k \leq \Delta_k \leq \Delta^{\uparrow}_k,
\end{align*}
where $\Delta^{\downarrow}_0 = \Delta_0 = \Delta^{\uparrow}_0$ and the upper and lower bounds evolve according to
\begin{align*}
\Delta^{\uparrow}_{k+1} = (I - \alpha_kD + \alpha_k \gamma D P^{\pi^*})\Delta^{\uparrow}_k + \alpha_k \gamma D(P^{\pi_k}-P^{\pi^*} )\Delta_k  + \alpha_k(F_k - \bar{F}_k),
\end{align*}
and
\begin{align*}
\Delta^{\downarrow}_{k+1} = (I - \alpha_kD + \alpha_k \gamma D P^{\pi^*})\Delta^{\downarrow}_k  + \alpha_k(F_k - \bar{F}_k).
\end{align*}
\end{lemma}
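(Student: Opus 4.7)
The plan is to derive componentwise one-step bounds on $\Delta_{k+1}$ of the form $M_k \Delta_k + (\text{noise})$ for the lower bound and $M_k \Delta_k + (\text{correction}) + (\text{noise})$ for the upper bound, and then to propagate these bounds by induction on $k$. Starting from the update rule (\ref{update_Q}), I would write $\Delta_{k+1} = \Delta_k + \alpha_k(\bar F_k - Q_k) + \alpha_k(F_k - \bar F_k)$, invoke the preceding lemma to replace $\bar F_k - Q_k$ by $D(\mathcal T(Q_k) - Q_k)$, and then use $\mathcal T(Q^*) = Q^*$ together with $\mathcal T(Q_k) = r + \gamma P^{\pi_k} Q_k$ (with $\pi_k$ greedy w.r.t.\ $Q_k$, as defined in the preliminaries) to obtain $\mathcal T(Q_k) - Q_k = \gamma P^{\pi_k} Q_k - \gamma P^{\pi^*} Q^* - \Delta_k$.

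The key step is to bound $\gamma P^{\pi_k}Q_k - \gamma P^{\pi^*}Q^*$ componentwise from above by $\gamma P^{\pi_k}\Delta_k$ and from below by $\gamma P^{\pi^*}\Delta_k$. The upper bound uses the optimality of $\pi^*$ in the form $P^{\pi^*}Q^* \geq P^{\pi_k}Q^*$, while the lower bound uses the greediness of $\pi_k$ in the form $P^{\pi_k}Q_k \geq P^{\pi^*}Q_k$. Multiplying by the non-negative diagonal $D$ preserves both inequalities, and substituting back produces the one-step sandwich
\[
(I - \alpha_k D + \alpha_k \gamma D P^{\pi^*})\Delta_k + \alpha_k(F_k - \bar F_k) \; \leq \; \Delta_{k+1} \; \leq \; (I - \alpha_k D + \alpha_k \gamma D P^{\pi_k})\Delta_k + \alpha_k(F_k - \bar F_k).
\]
Splitting $\gamma D P^{\pi_k} = \gamma D P^{\pi^*} + \gamma D(P^{\pi_k} - P^{\pi^*})$ on the right rewrites the upper bound into exactly the recursion stated for $\Delta_{k+1}^{\uparrow}$, while the left-hand side already matches the recursion for $\Delta_{k+1}^{\downarrow}$.

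The conclusion is then an induction on $k$, with base case $\Delta_0 = \Delta_0^{\downarrow} = \Delta_0^{\uparrow}$. For the inductive step, I would apply the linear map $M_k := I - \alpha_k D + \alpha_k \gamma D P^{\pi^*}$ to the hypothesis $\Delta_k^{\downarrow} \leq \Delta_k \leq \Delta_k^{\uparrow}$; this preserves the componentwise ordering provided $M_k$ has entrywise non-negative entries. This last check is the sole point at which the assumption $\alpha_k \leq 1$ is used: off-diagonal entries of $M_k$ equal $\alpha_k \gamma p(s,a) P^{\pi^*}((s,a),(s',a')) \geq 0$ automatically, and each diagonal entry equals $1 - \alpha_k p(s,a) + \alpha_k \gamma p(s,a) P^{\pi^*}((s,a),(s,a)) \geq 1 - \alpha_k p(s,a) \geq 0$, using $p(s,a) \leq 1$ and $\alpha_k \leq 1$. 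The main conceptual obstacle is thus not the induction itself but choosing the correct asymmetric policies ($\pi_k$ on one side, $\pi^*$ on the other) to produce an inequality whose driving matrix $M_k$ is non-negative; once the one-step sandwich is in place, the matrix-non-negativity check makes the induction automatic.
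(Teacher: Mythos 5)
Your proposal is correct and follows essentially the same route as the paper's proof: both derive the exact recursion for $\Delta_{k+1}$, both exploit that $\pi_k$ is greedy for $Q_k$ (giving $(P^{\pi_k}-P^{\pi^*})Q_k \geq 0$) and $\pi^*$ is greedy for $Q^*$ (giving $(P^{\pi_k}-P^{\pi^*})Q^* \leq 0$), and both close the induction using entrywise nonnegativity of $I-\alpha_k D+\alpha_k\gamma D P^{\pi^*}$ under $\alpha_k \leq 1$. The only cosmetic difference is that you package the two greediness inequalities into a one-step componentwise sandwich on $\Delta_{k+1}$ before the induction, whereas the paper invokes them term by term inside each inductive step.
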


\begin{proof}
We first show that
\begin{align}
\Delta_{k+1} = (I - \alpha_kD + \alpha_k \gamma D P^{\pi^*})\Delta_k + \alpha_k \gamma D(P^{\pi_k}-P^{\pi^*} )Q_k  + \alpha_k(F_k - \bar{F}_k). \label{eqn_1}
\end{align}
By the asynchronous Q-learning update rule, we have
\begin{align*}
Q_{k+1} &= Q_{k} + \alpha_k(F_k - Q_k)\\
&= Q_k + \alpha_k(\bar{F}_k - Q_k) + \alpha_k(F_k - \bar{F}_k)\\
&= Q_k + \alpha_k(D \mathcal{T}(Q_k) + (I-D) Q_k - Q_k) + \alpha_k(F_k - \bar{F}_k)\\
&= Q_k + \alpha_k(D \mathcal{T}(Q_k) - DQ_k) + \alpha_k(F_k - \bar{F}_k)\\
&= Q_k + \alpha_kD(\mathcal{T}(Q_k) - Q_k) + \alpha_k(F_k - \bar{F}_k).
\end{align*}
Subtracting $Q^*$ from both sides yields
\begin{align*}
Q_{k+1} - Q^* &= Q_k + \alpha_kD(\mathcal{T}(Q_k) - Q_k) + \alpha_k(F_k - \bar{F}_k) - Q^*\\
&= (I - \alpha_kD) Q_k + \alpha_kD\mathcal{T}(Q_k)  + \alpha_k(F_k - \bar{F}_k) - Q^*\\
&= (I - \alpha_kD) (Q_k - Q^*)+ \alpha_kD(\mathcal{T}(Q_k) - Q^*)  + \alpha_k(F_k - \bar{F}_k).
\end{align*}
Therefore, using the definition of $\Delta_k$, we obtain
\begin{align}
\Delta_{k+1} = (I - \alpha_kD)\Delta_k + \alpha_kD(\mathcal{T}(Q_k) - Q^*)  + \alpha_k(F_k - \bar{F}_k). \label{eqn_2}
\end{align}
Let $V_k(s) := \max_a Q_k(s,a) = Q_k(s,\pi_k(s))$ and define $P^{\pi} \in \mathbb{R}^{|\mathcal{S}||\mathcal{A}|\times |\mathcal{S}||\mathcal{A}|}$ such that $P^{\pi}Q=P(\pi Q)$, we observe
\begin{align*}
\alpha_kD(\mathcal{T}(Q_k) - Q^*) &= \alpha_kD( (r+ \gamma PV_k) - (r + \gamma PV^*))\\
&=\alpha_k \gamma D(PV_k -PV^*)\\
&=\alpha_k \gamma D(P^{\pi_k}Q_k - P^{\pi^*}Q^*)\\
&=\alpha_k \gamma D(P^{\pi_k}Q_k - P^{\pi^*} Q_k + P^{\pi^*} Q_k - P^{\pi^*}Q^*)\\
&=\alpha_k \gamma D(P^{\pi_k}-P^{\pi^*} )Q_k  + \alpha_k \gamma D P^{\pi^*} (Q_k - Q^*).
\end{align*}
Thus, \cref{eqn_1} holds by substituting the above expression into \cref{eqn_2}.

Next, we prove $\Delta^{\downarrow}_k \leq \Delta_k \leq \Delta^{\uparrow}_k$ by induction. The base case $k=0$ holds by initialization. Suppose the statement holds at $k$. We observe that, since $\alpha_k$ and the entries in matrices $D$ and $P^{\pi^*}$ are all bounded between $0$ and $1$, the entries in matrix $I - \alpha_kD + \alpha_k \gamma D P^{\pi^*}$ are nonnegative. Consequently,
\begin{align*}
(I - \alpha_kD + \alpha_k \gamma D P^{\pi^*}) \Delta^{\downarrow}_{k} \leq (I - \alpha_kD + \alpha_k \gamma D P^{\pi^*}) \Delta_{k} \leq (I - \alpha_kD + \alpha_k \gamma D P^{\pi^*}) \Delta^{\uparrow}_{k}.
\end{align*}
We now have
\begin{align*}
\Delta^{\downarrow}_{k+1} &= (I - \alpha_kD + \alpha_k \gamma D P^{\pi^*})\Delta^{\downarrow}_k  + \alpha_k(F_k - \bar{F}_k)\\
&\leq (I - \alpha_kD + \alpha_k \gamma D P^{\pi^*})\Delta_k  + \alpha_k(F_k - \bar{F}_k)\\
&\leq (I - \alpha_kD + \alpha_k \gamma D P^{\pi^*})\Delta_k + \alpha_k \gamma D(P^{\pi_k}-P^{\pi^*} )Q_k  + \alpha_k(F_k - \bar{F}_k)\\
&=\Delta_{k+1},
\end{align*}
where the last inequality holds because $(P^{\pi_k}-P^{\pi^*} )Q_k \geq 0$, as $\pi_k$ is greedy w.r.t. $Q_k$. We remark that $\pi_k$ is the greedy policy w.r.t. $Q_k$ over all states, as implied by the definition of the Bellman optimality operator $\mathcal{T}$. Next, we have
\begin{align*}
\Delta_{k+1} &= (I - \alpha_kD + \alpha_k \gamma D P^{\pi^*})\Delta_k + \alpha_k \gamma D(P^{\pi_k}-P^{\pi^*} )Q_k  + \alpha_k(F_k - \bar{F}_k)\\
&\leq (I - \alpha_kD + \alpha_k \gamma D P^{\pi^*})\Delta^{\uparrow}_k + \alpha_k \gamma D(P^{\pi_k}-P^{\pi^*} )Q_k  + \alpha_k(F_k - \bar{F}_k)\\
&= (I - \alpha_kD + \alpha_k \gamma D P^{\pi^*})\Delta^{\uparrow}_k + \alpha_k \gamma D(P^{\pi_k}-P^{\pi^*} )\Delta_k  + \alpha_k \gamma D(P^{\pi_k}-P^{\pi^*} )Q^* + \alpha_k(F_k - \bar{F}_k)\\
&\leq (I - \alpha_kD + \alpha_k \gamma D P^{\pi^*})\Delta^{\uparrow}_k + \alpha_k \gamma D(P^{\pi_k}-P^{\pi^*} )\Delta_k +\alpha_k(F_k - \bar{F}_k)\\
&= \Delta^{\uparrow}_{k+1},
\end{align*}
where the last inequality holds because $(P^{\pi_k}-P^{\pi^*} )Q^* \leq 0$, as $\pi^*$ is greedy w.r.t. $Q^*$. Thus, the statement holds at $k+1$, which completes the proof.
\end{proof}

\subsection{Proof of Theorem \ref{thm_CLT}}

\begin{proof}
We first recall
\begin{align*}
\Delta^{\uparrow}_{k+1} = (I - \alpha_kD + \alpha_k \gamma D P^{\pi^*})\Delta^{\uparrow}_k + \alpha_k \gamma D(P^{\pi_k}-P^{\pi^*} )\Delta_k  + \alpha_k(F_k - \bar{F}_k).
\end{align*}
Denoting $A=D-\gamma DP^{\pi^*}$, $Z_k = \gamma D(P^{\pi_k}-P^{\pi^*} )\Delta_k$, and $Z_k^{\prime} = F_k - \bar{F}_k$, by recursion we have
\begin{align*}
\Delta^{\uparrow}_{k+1} = \prod_{i=0}^k(I - \alpha_i A)\Delta_0 + \sum_{i=0}^k \left( \prod_{j=i+1}^k (I - \alpha_j A) \right)\alpha_i (Z_i + Z_i^{\prime}).
\end{align*}
Thus,
\begin{align*}
\sum_{k=1}^K\Delta^{\uparrow}_{k} &= \sum_{k=1}^K \prod_{i=0}^{k-1}(I - \alpha_i A)\Delta_0 +\sum_{k=1}^K \sum_{i=0}^{k-1} \left( \prod_{j=i+1}^{k-1} (I - \alpha_j A) \right)\alpha_i (Z_i + Z_i^{\prime})\\
&= \sum_{k=1}^K \prod_{i=0}^{k-1}(I - \alpha_i A)\Delta_0 +\sum_{i=0}^{K-1} \alpha_i \sum_{k=i+1}^{K} \left( \prod_{j=i+1}^{k-1} (I - \alpha_j A) \right) (Z_i + Z_i^{\prime}).
\end{align*}
Denote $\Psi_i^K= \alpha_i \sum_{k=i+1}^{K} \left( \prod_{j=i+1}^{k-1} (I - \alpha_j A) \right)$. We further expand:
\begin{align}
&\sum_{k=1}^K\Delta^{\uparrow}_{k} \nonumber \\
&= \sum_{k=1}^K \prod_{i=0}^{k-1}(I - \alpha_i A)\Delta_0 +\sum_{i=0}^{K-1} \Psi_i^K (Z_i + Z_i^{\prime}) \nonumber \\
&= \sum_{k=1}^K \prod_{i=0}^{k-1}(I - \alpha_i A)\Delta_0 +\sum_{i=0}^{K-1} A^{-1}(Z_i + Z_i^{\prime}) +\sum_{i=0}^{K-1} (\Psi_i^K- A^{-1}) (Z_i + Z_i^{\prime}) \nonumber \\
&= \underbrace{\sum_{k=1}^K \prod_{i=0}^{k-1}(I - \alpha_i A)\Delta_0}_{\text{Term (1)}} + \underbrace{\sum_{i=0}^{K-1} A^{-1}Z_i}_{\text{Term (2)}} + \underbrace{\sum_{i=0}^{K-1} A^{-1} Z_i^{\prime}}_{\text{Term (3)}} + \underbrace{\sum_{i=0}^{K-1} (\Psi_i^K- A^{-1}) Z_i}_{\text{Term (4)}}
+ \underbrace{\sum_{i=0}^{K-1} (\Psi_i^K- A^{-1}) Z_i^{\prime}}_{\text{Term (5)}}. \label{main_decom}
\end{align}
\textbf{Bounding Term (1).} By applying Lemma \ref{lem_t1} and using the bound $\| \Delta_0 \|_{\infty} \leq \frac{1}{1-\gamma}$, we have $\| K^{-\frac{1}{2}} \text{Term (1)} \|_{\infty} \leq O\left( K^{-\frac{1}{2}} \rho^{\frac{-1}{1-\beta}}(1-\gamma)^{\frac{\beta-2}{1-\beta}} \right)$.\\
\textbf{Bounding Term (2).} We first expand the expression
\begin{align*}
A^{-1}Z_i = (D(I-\gamma P^{\pi^*}))^{-1}\gamma D(P^{\pi_i}-P^{\pi^*} )(Q_i-Q^*).
\end{align*}
Denoting $\rho:=\underset{(s,a)\in\mathcal{S}\times \mathcal{A}}{\min} p(s,a)$, we observe
\begin{align}
\|A^{-1}\|_{\infty} = \|(D(I-\gamma P^{\pi^*}))^{-1} \|_{\infty} \leq \frac{1}{(1-\gamma)\rho} \label{eqn_An1}
\end{align}
and by Assumption \ref{asm_1} and Lemma \ref{lem_t4},
\begin{align}
\| Z_i \|_{\infty} \leq \|(P^{\pi_i}-P^{\pi^*} )(Q_i-Q^*)\|_{\infty} = L\|Q_i-Q^* \|^2_{\infty} \leq O\left(\frac{t_iL}{\rho(1-\gamma)^2 i}\right) \label{eqn_Zi}
\end{align}
where $t_i$ is the mixing time. Thus,
\begin{align*}
\left\| \frac{1}{\sqrt{K}} \sum_{i=0}^{K-1} A^{-1}Z_i \right\|_{\infty} &\leq \frac{1}{\sqrt{K}} \sum_{i=1}^{K} O\left( \frac{t_iL}{i(1-\gamma)^2\rho}\right) \leq \frac{1}{\sqrt{K}} \cdot O\left( \frac{t_{\max}L}{(1-\gamma)^2\rho}\right) \cdot \sum_{i=1}^{K} \frac{1}{i} \\
&\leq \tilde{O}\left( \frac{L}{\sqrt{K}(1-\gamma)^2\rho}\right).
\end{align*}
\textbf{Decomposing Term (3).} We now analyze the Markovian noise term 
\begin{align*}
\sum_{i=0}^{K-1} A^{-1} Z_i^{\prime} = \sum_{i=0}^{K-1} A^{-1} (F_i - \bar{F}_i) = \sum_{i=0}^{K-1} A^{-1} (F(Q_i, Y_i) - \mathbb{E}[F(Q_i, Y_i)]).
\end{align*}
We decompose this term into two parts, where the first part has a bounded norm and the second part is a bounded martingale difference sequence. To this end, we use the Poisson equation technique \citep{glynn1996liapounov, douc2018markov, makowski2002poisson, chen2020explicit} to transform the Markovian noise into a martingale difference sequence. By a standard use of the technique \citep{douc2018markov}, we know there exists a solution $X_{k}: \tilde{\mathcal{S}} \rightarrow \mathbb{R}^{|\mathcal{S}|\times|\mathcal{A}|}$ to the following Poisson’s equation for all $k\in [K]$,
\begin{align*}
F(Q_k, Y_k) - \mathbb{E}[F(Q_k, Y_k)] = X_k(y_k) - \mathbb{E}[X_k(Y_{k+1})| Y_{k}=y_k].
\end{align*}
For $i\in \tilde{\mathcal{S}}$, the closed form of the solution $X_k(i)$ is given by 
\begin{align*}
X_k(i) = \sum_{j\in \tilde{\mathcal{S}}} [I-\tilde{P}- {\bf{1}} \tilde{\mu}^{\top}]^{-1}(i,j)(F(Q_k,i)-\bar{F}_k).
\end{align*}
Under Assumption \ref{asm_2}, there exists a constant $c_0 >0 $ and $\kappa \in (0,1)$ such that 
\begin{align*}
\max_{i \in \tilde{S}}\| \tilde{P}^t(i,\cdot) - \tilde{\mu}(\cdot) \|_{\mathrm{TV}} \leq c_0 \kappa^t.
\end{align*}
We now state two important properties for $X_k$. The first is a boundedness property that $\|X_k\|_{\infty} \leq O(\frac{1}{(1-\gamma)(1-\kappa)})$, which follows directly from the above results. Next, we prove Lipschitzness by showing that
\begin{align*}
\| X_{k}(i) - X_{k^{\prime}}(i) \|_{\infty} &=  \| \sum_{j\in \tilde{\mathcal{S}}} [I-\tilde{P}- {\bf{1}} \tilde{\mu}^{\top}]^{-1}(i,j)(F(Q_k,i)-F(Q_{k^{\prime}},i)) \|_{\infty}\\
&\leq \frac{c_0}{1-\kappa} \| F(Q_k,i)-F(Q_{k^{\prime}},i) \|_{\infty}\\
&\leq \frac{2c_0}{1-\kappa} \| Q_k - Q_{k^{\prime}} \|_{\infty}. \tag{By Lemma \ref{lem_lipF}}
\end{align*}
We now decompose Term (3),
\begin{align*}
&\sum_{k=0}^{K-1} A^{-1} (F(Q_k, Y_k) - \mathbb{E}[F(Q_k, Y_k)])= \sum_{k=0}^{K-1} A^{-1} (X_k(Y_k) - \mathbb{E}[X_k(Y_{k+1})| Y_{k}])\\
&=\sum_{k=0}^{K-1} A^{-1} (X_k(Y_{k}) - X_{k+1}(Y_{k+1}) + X_{k+1}(Y_{k+1})  - X_{k}(Y_{k+1}) + X_{k}(Y_{k+1}) - \mathbb{E}[X_k(Y_{k+1})| Y_{k}])\\
&= \underbrace{A^{-1} (X_0(Y_{0}) - X_K(Y_{K}))}_{\text{Term (3a)}} + \underbrace{\sum_{k=0}^{K-1} A^{-1} ( X_{k+1}(Y_{k+1})  - X_{k}(Y_{k+1}))}_{\text{Term (3b)}} \\
&\indent + \underbrace{ \sum_{k=0}^{K-1} A^{-1}( X_{k}(Y_{k+1}) - \mathbb{E}[X_k(Y_{k+1})| Y_{k}])}_{\text{Term (3c)}}.
\end{align*}
For Term (3a), note that $\|A^{-1} (X_0(Y_{0}) - X_K(Y_{K}))\|_{\infty} \leq O(\frac{1}{(1-\gamma)^2(1-\kappa)\rho})$ by boundedness property. By the Lipschitzness property, for Term (3b) we obtain
\begin{align*}
&\left\| \sum_{k=0}^{K-1} A^{-1} ( X_{k+1}(Y_{k+1})  - X_{k}(Y_{k+1})) \right\|_{\infty} \leq \frac{1}{(1-\gamma)\rho}\sum_{k=0}^{K-1}  \left\|  X_{k+1}(Y_{k+1})  - X_{k}(Y_{k+1})\right\|_{\infty}\\
&\leq \frac{2c_0}{(1-\gamma)(1-\kappa)\rho}\sum_{k=0}^{K-1}  \left\|  Q_{k+1}  - Q_{k}\right\|_{\infty} = \frac{2c_0}{(1-\gamma)(1-\kappa)\rho}\sum_{k=0}^{K-1}  \left\| \alpha_{k}( F_k - Q_k) \right\|_{\infty}\\
&\leq O\left(\frac{1}{(1-\gamma)^2(1-\kappa)\rho}\right) \sum_{k=0}^{K-1} \frac{1}{(k+b)^{\beta}} =  O\left(\frac{K^{1-\beta}}{(1-\gamma)^2(1-\kappa)\rho}\right).
\end{align*}
We have analyzed the first two terms. We defer the analysis of Term (3c) to the end of the proof.\\
\textbf{Bounding Term (4).} By combining \cref{eqn_psiA} and \cref{eqn_Zi}, we have
\begin{align*}
&\left\| \frac{1}{\sqrt{K}} \sum_{i=0}^{K-1} (\Psi_i^K- A^{-1}) Z_i \right\|_{\infty} \\
&\leq \frac{1}{\sqrt{K}}\sum_{i=1}^{K} O\left( \frac{1}{i(\rho(1-\gamma))^{\frac{2-\beta}{1-\beta}}}  + \frac{(i-1)^{\beta}}{i\rho^2(1-\gamma)^2} + \frac{(1-\rho(1-\gamma)\alpha_K)^{K-i+1}}{\rho(1-\gamma)}  \right)\cdot \frac{1}{\rho (1-\gamma)^2i}\\
&\leq \tilde{O}\left( \frac{1}{\sqrt{K} \rho^{\frac{3-2\beta}{1-\beta}}(1-\gamma)^{\frac{4-3\beta}{1-\beta}}} \right).
\end{align*}
\textbf{Bounding Term (5).} 
Similarly to Term (3), we have
\begin{align*}
&\sum_{k=0}^{K-1} (\Psi_k^K- A^{-1}) Z_k^{\prime} = \sum_{k=0}^{K-1} (\Psi_k^K- A^{-1}) (X_k(Y_k) - \mathbb{E}[X_k(Y_{k+1})| Y_{k}])\\
&=\sum_{k=0}^{K-1} (\Psi_k^K- A^{-1}) (X_k(Y_{k}) - X_{k+1}(Y_{k+1}) + X_{k+1}(Y_{k+1})  - X_{k}(Y_{k+1}) + X_{k}(Y_{k+1}) - \mathbb{E}[X_k(Y_{k+1})| Y_{k}])\\
&= \underbrace{\sum_{k=0}^{K-1} (\Psi_k^K- A^{-1}) (X_k(Y_{k}) - X_{k+1}(Y_{k+1}))}_{\text{Term (5a)}} + \underbrace{\sum_{k=0}^{K-1} (\Psi_k^K- A^{-1})( X_{k+1}(Y_{k+1})  - X_{k}(Y_{k+1}))}_{\text{Term (5b)}}\\
&\indent + \underbrace{\sum_{k=0}^{K-1} (\Psi_k^K- A^{-1})( X_{k}(Y_{k+1}) - \mathbb{E}[X_k(Y_{k+1})| Y_{k}])}_{\text{Term (5c)}}
\end{align*}
Now we analyze each term individually. For Term (5a), 
\begin{align*}
&\sum_{k=0}^{K-1} (\Psi_k^K- A^{-1}) (X_k(Y_{k}) - X_{k+1}(Y_{k+1})) \\
&= \sum_{k=0}^{K-1} \left[ (\Psi_k^K- A^{-1}) X_k(Y_{k}) -  (\Psi_{k+1}^K- A^{-1}) X_{k+1}(Y_{k+1}) \right.\\
&\indent \left. + (\Psi_{k+1}^K- A^{-1}) X_{k+1}(Y_{k+1}) -  (\Psi_k^K- A^{-1}) X_{k+1}(Y_{k+1})\right]\\
&= (\Psi_0^K- A^{-1}) X_0(Y_{0}) -  (\Psi_{K}^K- A^{-1}) X_{K}(Y_{K}) + \sum_{k=0}^{K-1} \left[ (\Psi_{k+1}^K - \Psi_{k}^K) X_{k+1}(Y_{k+1})\right].
\end{align*}
By the boundedness of $X_k$, \cref{eqn_psiA}, and Lemma \ref{lem_difpsi}, we obtain
\begin{align*}
\| \text{Term (5a)} \|_{\infty} &\leq O\left(\frac{1}{(1-\gamma)(\rho(1-\gamma))^{\frac{2-\beta}{1-\beta}}} + \frac{1}{1-\gamma} \sum_{k=1}^{K} \frac{1}{k^{\beta}} \right)\\
&\leq O\left( \frac{1}{(1-\gamma)(\rho(1-\gamma))^{\frac{2-\beta}{1-\beta}}}+\frac{K^{1-\beta}}{1-\gamma} \right).
\end{align*}
By \cref{eqn_psiA} and the Lipschitzness property of $X_k$, we have
\begin{align*}
\| \text{Term (5b)} \|_{\infty} &\leq \sum_{k=1}^K O\left( \frac{1}{k(\rho(1-\gamma))^{\frac{2-\beta}{1-\beta}}}  + \frac{(k-1)^{\beta}}{k\rho^2(1-\gamma)^2} + \frac{(1-\rho(1-\gamma)\alpha_K)^{K-k+1}}{\rho(1-\gamma)}  \right) \cdot \frac{1}{k^{\beta}}\\
&\leq \tilde{O}\left(\frac{1}{(\rho(1-\gamma))^{\frac{2-\beta}{1-\beta}}} + \frac{K^{1-\beta}}{\rho(1-\gamma)} \right).
\end{align*}
Next, we bound $\left\| \frac{1}{\sqrt{K}} \mathbb{E}[\text{Term (5c)]} \right\|_{\infty}$. We first note that $\{M_k,\mathcal{F}_k\}_{k\in[K]}$ is a martingale difference sequence where $\{M_k\}_{k\in[K]}:=\{X_k(Y_{k+1}) - \mathbb{E}[X_k(Y_{k+1})|Y_k]\}_{k\in[K]}$ and $\mathcal{F}_k$ is $\sigma$-field generated by all randomness until iteration $k$. Thus, by the martingale difference property we have $\mathbb{E}[M_k|\mathcal{F}_{k-1}]=0$ and $\mathbb{E}[\langle M_i,M_j\rangle] = \mathbb{E}[\langle M_i, \mathbb{E}[M_j|\mathcal{F}_{j-1}]\rangle] =0$ for $i\neq j$. This leads to
\begin{align*}
&\left\| \frac{1}{\sqrt{K}} \mathbb{E}[\text{Term (5c)]} \right\|^2_{\infty} \\
&=\left\| \frac{1}{\sqrt{K}}  \mathbb{E} \sum_{k=0}^{K-1} (\Psi_k^K- A^{-1})( X_{k}(Y_{k+1}) - \mathbb{E}[X_k(Y_{k+1})| Y_{k}]) \right\|^2_{\infty}\\
&\leq \frac{1}{(1-\gamma)^2K} \sum_{k=0}^{K-1} \|\Psi_k^K- A^{-1} \|^2_{\infty}\\
&\leq \frac{1}{(1-\gamma)^2K} \sum_{i=1}^K O\left( \frac{1}{i(\rho(1-\gamma))^{\frac{2-\beta}{1-\beta}}}  + \frac{(i-1)^{\beta}}{i\rho^2(1-\gamma)^2} + \frac{(1-\rho(1-\gamma)\alpha_K)^{K-i+1}}{\rho(1-\gamma)}  \right)^2 \tag{by \cref{eqn_psiA}}\\
&\leq \frac{1}{(1-\gamma)^2} \cdot \tilde{O}\left( \frac{1}{K(\rho(1-\gamma))^{\frac{4-2\beta}{1-\beta}}} + \frac{1}{K^{1-\beta}\rho^4(1-\gamma)^4} \right).
\end{align*}
Thus,
\begin{align*}
\left\| \frac{1}{\sqrt{K}} \mathbb{E}[\text{Term (5c)]} \right\|_{\infty} \leq \tilde{O}\left( \frac{1}{{K}^{1/2}(1-\gamma)(\rho(1-\gamma))^{\frac{2-\beta}{1-\beta}}} + \frac{1}{K^{1/2-\beta/2}\rho^2(1-\gamma)^3} \right).
\end{align*}
Combining three terms, we have
\begin{align*}
\left\| \frac{1}{\sqrt{K}} \mathbb{E}[\text{Term (5)]} \right\|_{\infty} \leq \tilde{O}\left( \frac{1}{{K}^{1/2}(1-\gamma)(\rho(1-\gamma))^{\frac{2-\beta}{1-\beta}}} + \frac{1}{K^{1/2-\beta/2}\rho^2(1-\gamma)^3} + \frac{1}{K^{\beta-1/2}\rho(1-\gamma)} \right).
\end{align*}
\textbf{Putting Everything Together.} At this stage, we have decomposed $\sum_{k=1}^K\Delta^{\uparrow}_{k}$ into six components $\{\phi_i \}_{i=1}^6$, where $\phi_i$ corresponds to Term (i) for $i=1,2,4,5$ and Term (3) is further split into $\phi_3 = A^{-1} (X_0(Y_{0}) - X_K(Y_{K})) $ and $\phi_6=  \sum_{i=0}^{K-1} A^{-1}(X_k(Y_{k+1}) - \mathbb{E}[X_k(Y_{k+1})| Y_{k}])$. Accordingly,
\begin{align}
\sum_{k=1}^K\Delta^{\uparrow}_{k} = \sum_{i=1}^6 \phi_i = \sum_{i=1}^5 \phi_i + \sum_{k=0}^{K-1} A^{-1}(X_k(Y_{k+1}) - \mathbb{E}[X_k(Y_{k+1})| Y_{k}]) \label{eqn_decompose}
\end{align}
where $\phi_6$ is a bounded martingale difference sequence.
Note we have also established bounds for $\{\phi_i \}_{i=1}^5$. Therefore, to establish CLTs for the averaged $Q$-learning iterates, we can apply any suitable known martingale CLTs. To proceed, we choose the non-asymptotic martingale CLT given in \citet{srikant2024rates}. We prove in Lemma \ref{lem_mds} that $\mathcal{W}_1\left( \frac{1}{\sqrt{K}} \phi_6,  (A^{-1}\Sigma A^{-\top})^{1/2}\mathcal{N}(0,I)) \right) \leq O\left( \frac{1}{(1-\gamma)\rho K^{\beta/2}} \right)$. Note that 
\begin{align*}
&\mathcal{W}_1 \left( \frac{1}{\sqrt{K}} \sum_{k=1}^K\Delta^{\uparrow}_{k}, (A^{-1}\Sigma A^{-\top})^{1/2}\mathcal{N}(0,I)  \right) \\
&= \sup_{h 
\in \mathrm{Lip}_1} \mathbb{E}\left[ h\left( \frac{1}{\sqrt{K}} \sum_{k=1}^K\Delta^{\uparrow}_{k} \right) -h((A^{-1}\Sigma A^{-\top})^{1/2}\mathcal{N}(0,I)) \right].
\end{align*}
For any $h\in\mathrm{Lip}_1$, we have
\begin{align*}
&\mathbb{E}\left[ h\left( \frac{1}{\sqrt{K}} \sum_{k=1}^K\Delta^{\uparrow}_{k} \right) -h((A^{-1}\Sigma A^{-\top})^{1/2}\mathcal{N}(0,I)) \right]\\
&=\mathbb{E}\left[ h\left( \frac{1}{\sqrt{K}}\sum_{i=1}^6 \phi_i \right) -h((A^{-1}\Sigma A^{-\top})^{1/2}\mathcal{N}(0,I)) \right]\\
&=\underbrace{\sum_{i=1}^5 \mathbb{E}\left[ h\left( \frac{1}{\sqrt{K}}\sum_{k=i}^6 \phi_k \right) - h\left( \frac{1}{\sqrt{K}} \sum_{j=i+1}^6 \phi_j \right) \right]}_{Ta} + \underbrace{\mathbb{E}\left[h\left(\frac{1}{\sqrt{K}}\phi_6\right)\right] - h((A^{-1}\Sigma A^{-\top})^{1/2}\mathcal{N}(0,I))}_{T_b}.
\end{align*}
By Lemma \ref{lem_mds}, we have $T_b \leq O\left( ((1-\gamma)\rho)^{-2-\beta}K^{-\beta/2} \right).$ To bound $T_a$, by combining all bounds analyzed above, merging alike terms, and ignoring constants, we have
\begin{align*}
T_a &\leq \sum_{i=1}^5 \mathbb{E}\left\|\frac{1}{\sqrt{K}} \phi_i\right\|_2 \leq  \sum_{i=1}^5 \sqrt{|\mathcal{S}||\mathcal{A}|}\mathbb{E}\left\| \frac{1}{\sqrt{K}} \phi_i \right\|_{\infty} \\
&\leq \frac{\sqrt{{|\mathcal{S}||\mathcal{A}|}}}{\rho(1-\gamma)^2} \cdot \tilde{O}\left( \frac{1}{K^{1/2} (\rho(1-\gamma))^{\frac{2-\beta}{1-\beta}}} + \frac{1}{K^{1/2-\beta/2}\rho(1-\gamma)} + \frac{1}{K^{\beta-1/2}} \right).
\end{align*}
Thus, we have shown that
\begin{align*}
\mathcal{W}_1 \left( \frac{1}{\sqrt{K}} \sum_{k=1}^K\Delta^{\uparrow}_{k}, \tilde{\mathcal{N}}  \right) \leq R(K,\rho,1-\gamma,|\mathcal{S}|,|\mathcal{A}|)
\end{align*}
where $\tilde{\mathcal{N}}:=(A^{-1}\Sigma A^{-\top})^{1/2}\mathcal{N}(0,I)$ and
\begin{align*}
R(K,\rho,1-\gamma,|\mathcal{S}|,|\mathcal{A}|) &:= \frac{\sqrt{{|\mathcal{S}||\mathcal{A}|}}}{\rho(1-\gamma)^2} \cdot \tilde{O}\left( \frac{1}{K^{1/2} (\rho(1-\gamma))^{\frac{2-\beta}{1-\beta}}} \right. \\
&\left. \indent + \frac{1}{K^{1/2-\beta/2}\rho(1-\gamma)} + \frac{1}{K^{\beta-1/2}} + \frac{1}{K^{\beta/2}\rho^{1+\beta}(1-\gamma)^{\beta}} \right).
\end{align*}
Next, we show that a similar convergence also holds for $\Delta^{\downarrow}_{k}$. By Lemma \ref{lem_1}, we know
\begin{align*}
\Delta^{\downarrow}_{k+1} = (I - \alpha_kD + \alpha_k \gamma D P^{\pi^*})\Delta^{\downarrow}_k  + \alpha_k(F_k - \bar{F}_k).
\end{align*}
By a similar decomposition as in \cref{main_decom}, we obtain
\begin{align*}
\sum_{k=1}^K\Delta^{\downarrow}_{k} = \sum_{k=1}^K \prod_{i=0}^{k-1}(I - \alpha_i A)\Delta_0 + \sum_{i=0}^{K-1} A^{-1} Z_i^{\prime} + \sum_{i=0}^{K-1} (\Psi_i^K- A^{-1}) Z_i^{\prime}
\end{align*}
which matches Term (1), Term (3), and Term (5) in \cref{main_decom}. Thus, following the same steps as before,
\begin{align*}
\mathcal{W}_1 \left( \frac{1}{\sqrt{K}} \sum_{k=1}^K\Delta^{\downarrow}_{k}, \tilde{\mathcal{N}}  \right) = R(K,\rho,1-\gamma,|\mathcal{S}|,|\mathcal{A}|).
\end{align*}
By Lemma \ref{lem_1}, we have $\Delta^{\downarrow}_k \leq \Delta_k \leq \Delta^{\uparrow}_k$ for all $k \in [K]$. Therefore, we conclude that
\begin{align*}
\mathcal{W}_1 \left( \frac{1}{\sqrt{K}} \sum_{k=1}^K\Delta_{k}, \tilde{\mathcal{N}}  \right) &\leq \mathcal{W}_1 \left( \frac{1}{\sqrt{K}} \sum_{k=1}^K\Delta^{\uparrow}_{k}, \tilde{\mathcal{N}}  \right) + \mathcal{W}_1 \left( \frac{1}{\sqrt{K}} \sum_{k=1}^K\Delta^{\downarrow}_{k}, \tilde{\mathcal{N}}  \right) \\
&= R(K,\rho,1-\gamma,|\mathcal{S}|,|\mathcal{A}|).
\end{align*}
\end{proof}

\subsection{Martingale CLT}

\begin{theorem}[Restatement of Theorem 1 in \citet{srikant2024rates}]
\label{thm_mclt}
Let $\{ m_k \}_{k\geq 1}$ be a $d$-dimensional martingale difference sequence with respect to a filtration $\{ \mathcal{F}_k \}_{k\geq 0}$. Assume (i) $\mathbb{E}[\|m_k\|_2] \leq \infty$ and $\mathbb{E}[m_k|\mathcal{F}_{k-1}]=0$ for all $k \geq 1$; (ii) $\mathbb{E}[\| m_k\|_2^{2+\beta}]$ exists almost surely for all $k \geq 1$ and some $\beta \in (0,1)$ and (iii) $\Sigma_{k}= \mathbb{E}[m_k m_k^{\top}|\mathcal{F}_{k-1}]$ exists and further assume that $\lim_{n\rightarrow \infty} (\Sigma_1+\cdots+\Sigma_{n})/n = \Sigma_{\infty}$ almost surely for some positive definite $\Sigma_{\infty}$. It follows that
\begin{align*}
\mathcal{W}_1\left( \sum_{k=1}^n \frac{m_k}{\sqrt{n}} , \Sigma_{\infty}^{1/2}\mathcal{N}(0,I)\right) &\leq \frac{1}{\sqrt{n}} \sum_{k=1}^n O\left( \frac{\|\Sigma_{\infty}^{1/2}\|_{\mathrm{op}}\mathbb{E}[\|\Sigma_{\infty}^{-1/2} m_k \|_2^{\beta+2} + \| \Sigma_{\infty}^{-1/2} m_k \|_2^{\beta}]}{(n-k+1)^{(1+\beta)/2}} \right. \\
&\left. \indent - \frac{1}{n-k+1} \mathrm{Tr}(M_k(\Sigma_{\infty}^{-1/2}\mathbb{E}[\Sigma_k]\Sigma_{\infty}^{-1/2}-I)) \right)
\end{align*}
where $M_k$ is a matrix with the property $\|M_k\|_{\mathrm{op}} \leq O(\sqrt{n-k+1} \| \Sigma_{\infty}^{1/2} \|_{\mathrm{op}})$.
\end{theorem}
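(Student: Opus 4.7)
The plan is to combine the multivariate Stein's method for the target $\mathcal{N}(0,\Sigma_\infty)$ with a Lindeberg-style telescoping that swaps the martingale increments one by one for conditionally Gaussian surrogates. Using the dual representation $\mathcal{W}_1(S_n, Z) = \sup_{h \in \mathrm{Lip}_1} \mathbb{E}[h(S_n) - h(Z)]$ with $S_n = \frac{1}{\sqrt{n}}\sum_{k=1}^n m_k$ and $Z \sim \mathcal{N}(0,\Sigma_\infty)$, I fix a $1$-Lipschitz test function $h$ and work with the associated Stein equation
\[
\mathrm{tr}(\Sigma_\infty \nabla^2 f(x)) - \langle x, \nabla f(x)\rangle = h(x) - \mathbb{E}[h(Z)].
\]
The canonical solution $f_h(x) = -\int_0^\infty (P_t h(x) - \mathbb{E}[h(Z)])\,dt$, where $P_t$ denotes the Ornstein--Uhlenbeck semigroup for $\mathcal{N}(0,\Sigma_\infty)$, inherits a family of derivative bounds: $\nabla^k f_h$ evaluated at a point that contains an independent Gaussian component of covariance $s\Sigma_\infty$ has operator norm controlled by $\|h\|_{\mathrm{Lip}}$ times a power of $s^{-1}$ coming from Gaussian smoothing. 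These anisotropic semigroup estimates are the key analytic input for the remainder control at fractional order $2+\beta$.

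Next, I introduce auxiliary Gaussians $\{g_k\}$ with $g_k \mid \mathcal{F}_{k-1} \sim \mathcal{N}(0, \Sigma_\infty/n)$, jointly independent given $\mathcal{F}_{k-1}$ and independent of $m_k$, and form the hybrid sums $W_k := \frac{1}{\sqrt{n}}\sum_{j=1}^k m_j + \sum_{j=k+1}^n g_j$, so that $W_n = S_n$ and $W_0 \sim \mathcal{N}(0,\Sigma_\infty)$ exactly. I telescope
\[
\mathbb{E}[f_h(S_n) - f_h(Z)] = \sum_{k=1}^n \mathbb{E}[f_h(W_k) - f_h(W_{k-1})]
\]
and Taylor-expand $f_h$ around the common base point $U_k := W_k - m_k/\sqrt{n} = W_{k-1} - g_k$. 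First-order terms vanish in conditional expectation by $\mathbb{E}[m_k \mid \mathcal{F}_{k-1}] = \mathbb{E}[g_k \mid \mathcal{F}_{k-1}] = 0$. The second-order terms produce $\frac{1}{2n}\mathbb{E}[\mathrm{tr}(\nabla^2 f_h(U_k)(\Sigma_k - \Sigma_\infty))]$, which, after conjugation by $\Sigma_\infty^{1/2}$ and packaging the semigroup derivative of $f_h$ at $U_k$ into the matrix $M_k$, reproduces the trace term in the stated bound. The operator-norm bound $\|M_k\|_{\mathrm{op}} \leq O(\sqrt{n-k+1}\,\|\Sigma_\infty^{1/2}\|_{\mathrm{op}})$ comes from Gaussian smoothing of $\nabla^2 f_h$ by the $n-k$ independent Gaussian summands in $U_k$. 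The third-order Taylor remainder, refined to a fractional $(2+\beta)$-th-order estimate via a H\"older modulus on $\nabla^2 f_h$ combined with assumption (ii) on the $(2+\beta)$-moment of $m_k$ in the $\Sigma_\infty$-norm, contributes the first term of the bound once summed over $k$.

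The main obstacle is securing the sharp $(n-k+1)^{-(1+\beta)/2}$ scaling for the Taylor remainder. This requires using the Gaussian tail $\sum_{j>k} g_j$ inside $U_k$ to upgrade a plain third-order derivative bound into a fractional-order estimate whose effective smoothness scales like $(n-k+1)^{-\beta/2}$, via a Kolmogorov-type interpolation on the OU semigroup. Two secondary subtleties require care: (i) $U_k$ is not purely Gaussian because it contains the martingale past $\frac{1}{\sqrt{n}}\sum_{j<k} m_j$, so the smoothing argument must be applied after conditioning on $\mathcal{F}_{k-1}$ and invoking independence of the surrogate tail; and (ii) the entire analysis must be carried out in the anisotropic geometry induced by $\Sigma_\infty$, which is precisely what produces $\Sigma_\infty^{-1/2} m_k$ (rather than $m_k$) in the final bound and prevents spurious spectral factors from appearing.
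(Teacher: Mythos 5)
This statement is not proved in the paper at all: it is imported verbatim as a restatement of Theorem~1 of \cite{srikant2024rates}, so there is no in-paper proof to compare against. Your proposal is, in architecture, the standard Lindeberg--Stein argument that underlies results of this type, and the key mechanisms you identify are the right ones: the Gaussian surrogates $g_j\sim\mathcal{N}(0,\Sigma_\infty/n)$ whose tail $\sum_{j>k}g_j$ supplies the smoothing, the vanishing of first-order Taylor terms by the martingale-difference property, the second-order mismatch $\Sigma_k-\Sigma_\infty$ producing the trace term with $M_k$, and the fractional $(2+\beta)$-order remainder combined with the smoothing scale $(n-k)/n$ producing exactly the $n^{-1/2}(n-k+1)^{-(1+\beta)/2}$ factor. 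One internal inconsistency to fix: you set up the Stein equation for $f_h$ and then telescope $\mathbb{E}[f_h(W_k)-f_h(W_{k-1})]$, which controls $\mathbb{E}[f_h(S_n)]-\mathbb{E}[f_h(Z)]$ rather than the quantity $\mathbb{E}[h(S_n)]-\mathbb{E}[h(Z)]$ appearing in the $\mathcal{W}_1$ dual; you should either telescope $h(W_k)$ directly (using the Gaussian tail of $W_k$ and $W_{k-1}$ to smooth $h$ before Taylor expanding, which is the cleaner route and makes the OU semigroup unnecessary) or else run a pure Stein argument on $\mathbb{E}[\mathrm{tr}(\Sigma_\infty\nabla^2 f_h(S_n))-\langle S_n,\nabla f_h(S_n)\rangle]$ without the hybrid telescope. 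Beyond that, the proposal is a sketch rather than a proof --- the H\"older-interpolation derivative bounds for the Gaussian-smoothed test function and the conditioning argument handling the non-Gaussian past in $U_k$ are asserted rather than executed --- but these are the standard technical lemmas of the cited literature, and nothing in the outline would fail.
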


\begin{lemma}
\label{lem_mds}
Under Assumption \ref{asm_2},
\begin{align*}
\mathcal{W}_1 \left( \frac{1}{\sqrt{K}}\sum_{k=1}^{K} A^{-1}( X_{k-1}(Y_{k}) - \mathbb{E}[X_{k-1}(Y_{k})| Y_{k-1}]), \tilde{\mathcal{N}} \right) \leq O\left( ((1-\gamma)\rho)^{-2-\beta}K^{-\beta/2} \right)
\end{align*}
where $\tilde{\mathcal{N}} = A^{-1} \Sigma A^{-\top} \mathcal{N}(0,I)$ and $\Sigma := \sum_{i,j\in \tilde{S}} \tilde{\mu}(i)  \tilde{P}(i,j) (X(j) - \mathbb{E}[X(Y_1)|Y_0=i])(X(j) - \mathbb{E}[X(Y_1)|Y_0=i])^{\top}.$
\end{lemma}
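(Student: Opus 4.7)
The strategy is to apply the non-asymptotic martingale CLT (Theorem \ref{thm_mclt}) to the rescaled increment $m_k := A^{-1}(X_{k-1}(Y_k) - \mathbb{E}[X_{k-1}(Y_k)\mid Y_{k-1}])$ with respect to the natural filtration $\mathcal{F}_k := \sigma(Q_0, Y_0,\ldots, Y_k)$. By construction $\mathbb{E}[m_k\mid \mathcal{F}_{k-1}] = 0$, so $\{m_k\}$ is a martingale difference sequence. Integrability and the $(2+\beta)$-moment requirement (conditions (i) and (ii)) follow from the uniform boundedness $\|X_k\|_\infty \leq O(1/((1-\gamma)(1-\kappa)))$ established in the proof of Theorem \ref{thm_CLT}, together with \cref{eqn_An1}, giving $\|m_k\|_2 \leq O(\sqrt{|\mathcal{S}||\mathcal{A}|}/((1-\gamma)^2\rho(1-\kappa)))$ almost surely.

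The real work is in verifying hypothesis (iii) and identifying the limiting covariance. Let $\tilde{m}_k := A^{-1}(X(Y_k) - \mathbb{E}[X(Y_k)\mid Y_{k-1}])$ be the idealized increment obtained by replacing $X_{k-1}$ (the Poisson solution at the random iterate $Q_{k-1}$) with $X$ (the Poisson solution at $Q^*$). Using the Lipschitz property $\|X_k - X\|_\infty \leq (2c_0/(1-\kappa))\|Q_k - Q^*\|_\infty$ together with the finite-sample bound from Lemma \ref{lem_t4}, $\|Q_k - Q^*\|_\infty = \tilde{O}(1/(k\rho(1-\gamma)^2))$, one obtains $\|m_k - \tilde{m}_k\|_\infty = \tilde{O}(1/k)$. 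Combined with the geometric mixing estimate $\|\tilde{P}^t(i,\cdot) - \tilde{\mu}\|_{\mathrm{TV}} \leq c_0\kappa^t$, the stationary conditional covariance of $\tilde{m}_k$ is exactly $A^{-1}\Sigma A^{-\top}$, since conditioning on $Y_{k-1}\sim \tilde{\mu}$ and averaging against the kernel $\tilde{P}(i,j)$ produces precisely the double sum in the statement. This identifies $\Sigma_\infty = A^{-1}\Sigma A^{-\top}$ and shows $\|\mathbb{E}[\Sigma_k] - \Sigma_\infty\| = \tilde{O}(1/k) + O(\kappa^k)$.

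With these pieces, Theorem \ref{thm_mclt} yields two error sources. The first is the moment term $\mathbb{E}[\|\Sigma_\infty^{-1/2}m_k\|_2^{2+\beta}]/(K-k+1)^{(1+\beta)/2}$; using $\|m_k\|_2 \leq O(1/((1-\gamma)^2\rho))$ (absorbing dimension and $1-\kappa$ into $\tilde{O}$) and the summation bound $\sum_{k=1}^K (K-k+1)^{-(1+\beta)/2} = O(K^{(1-\beta)/2})$ divided by $\sqrt{K}$ gives the dominant rate $\tilde{O}(((1-\gamma)\rho)^{-2-\beta}K^{-\beta/2})$. The second is the covariance-deviation term; plugging $\|\mathbb{E}[\Sigma_k] - \Sigma_\infty\| = \tilde{O}(1/k)$ and $\|M_k\|_{\mathrm{op}} \leq O(\sqrt{K-k+1}\,\|\Sigma_\infty^{1/2}\|_{\mathrm{op}})$ into the weight $1/(K-k+1)$ produces a contribution of matching or smaller order. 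Adding the two gives the claimed bound.

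The main obstacle is hypothesis (iii): because $X_{k-1}$ depends on the random iterate $Q_{k-1}$, the conditional covariance $\Sigma_k := \mathbb{E}[m_k m_k^\top\mid \mathcal{F}_{k-1}]$ is genuinely non-stationary and is not just a function of $Y_{k-1}$. Controlling $\|\mathbb{E}[\Sigma_k] - \Sigma_\infty\|$ therefore requires simultaneously combining (a) the finite-sample convergence rate of $Q_k$ to $Q^*$, (b) the Lipschitz dependence of the Poisson solution on $Q$, and (c) the geometric mixing of $\{Y_k\}$ to $\tilde{\mu}$. Once these pieces are assembled, the remaining steps are mechanical summations that collapse into the stated $((1-\gamma)\rho)^{-2-\beta}K^{-\beta/2}$ rate.
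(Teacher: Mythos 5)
Your proposal takes essentially the same route as the paper: both verify the hypotheses of Theorem~\ref{thm_mclt} and substitute $m_k = A^{-1}(X_{k-1}(Y_k) - \mathbb{E}[X_{k-1}(Y_k)\mid Y_{k-1}])$, $\Sigma_\infty = A^{-1}\Sigma A^{-\top}$, then read off the two error terms. The paper's write-up defers the verification of hypothesis~(iii) (in particular the non-stationarity of $\Sigma_k$ caused by $X_{k-1}$ depending on $Q_{k-1}$) to ``the proof of Theorem~2 in \cite{srikant2024rates}, with necessary modifications,'' whereas you fill in that gap explicitly via the comparison increment $\tilde m_k = A^{-1}(X(Y_k) - \mathbb{E}[X(Y_k)\mid Y_{k-1}])$, the Lipschitz dependence of the Poisson solution on $Q$, and Lemma~\ref{lem_t4}. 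That is a genuine and useful clarification of what ``necessary modifications'' must mean.

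One correction: you quote Lemma~\ref{lem_t4} as giving $\|Q_k - Q^*\|_\infty = \tilde{O}\bigl(1/(k\rho(1-\gamma)^2)\bigr)$, but the lemma actually gives $\mathbb{E}\|Q_k - Q^*\|_\infty \leq O\bigl(\sqrt{t_k/((1-\gamma)^2\rho^2 k)}\bigr) = \tilde{O}\bigl(1/(\sqrt{k}\,\rho(1-\gamma))\bigr)$ --- an $O(k^{-1/2})$ rate in expectation, not $O(k^{-1})$ almost surely. Consequently $\|m_k - \tilde m_k\|_\infty$ and $\|\mathbb{E}[\Sigma_k] - \Sigma_\infty\|$ decay like $\tilde{O}(k^{-1/2})$, not $\tilde{O}(k^{-1})$. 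This is harmless: plugging $\tilde{O}(k^{-1/2})$ into the trace term and using $\sum_{k=1}^K (k(K-k+1))^{-1/2} = O(1)$ still yields a contribution of order $O(K^{-1/2})$ after the $K^{-1/2}$ normalization, matching the paper's second displayed bound $O\bigl(n^{-1/2}/((1-\gamma)^2\rho^2)\bigr)$ and remaining dominated by the moment term $O(K^{-\beta/2})$ for $\beta\in(0.5,1)$. You should also flag that since Lemma~\ref{lem_t4} is an in-expectation bound (not a.s.), the comparison $\|\mathbb{E}[\Sigma_k] - \Sigma_\infty\|$ must be argued at the level of expectations using the a.s. boundedness of $m_k$ and $\tilde m_k$, which is straightforward but worth stating.
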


\begin{proof}
We first define $X: \tilde{S}\rightarrow \mathbb{R}^{|\mathcal{S}|\times|\mathcal{A}|}$ to be the solution of the following Poisson's equation,
\begin{align*}
F(Q^*, i) - \mathbb{E}[F(Q^*, i)] = X(i) - \mathbb{E}[X(Y_1)| Y_0=i] \ \ \text{for all}\ \ i\in \tilde{S}.
\end{align*}
Denote $\tilde{p}_t(i) := \mathbb{P}(Y_t=i)$. We further define the covariance matrix of the martingale noise characterized via the solution of Poisson’s equation and its asymptotic matrix by
\begin{align*}
\tilde{\Sigma}_{k} = \sum_{i,j\in \tilde{S}} \tilde{p}_k(i)  \tilde{P}(i,j) (X_k(j) - \mathbb{E}[X_k(Y_1)|Y_0=i])(X_k(j) - \mathbb{E}[X_k(Y_1)|Y_0=i])^{\top}
\end{align*}
and
\begin{align*}
\Sigma := \sum_{i,j\in \tilde{S}} \tilde{\mu}(i)  \tilde{P}(i,j) (X(j) - \mathbb{E}[X(Y_1)|Y_0=i])(X(j) - \mathbb{E}[X(Y_1)|Y_0=i])^{\top}.
\end{align*}
Now we can substitute $n = K$, $m_k = A^{-1}( X_{k-1}(Y_{k}) - \mathbb{E}[X_{k-1}(Y_{k})| Y_{k-1}])$, $\Sigma_{k} = A^{-1}\tilde{\Sigma}_{k}A^{-\top}$, and $\Sigma_{\infty}=A^{-1}\Sigma A^{-\top}$ into Theorem \ref{thm_mclt}. Note that under Assumption \ref{asm_2}, the three conditions in Theorem \ref{thm_mclt} are satisfied. The rest of the proof follows from the proof of Theorem 2 in \citet{srikant2024rates}, with necessary modifications to accommodate our setting. To conclude, with the substitutions, we have
\begin{align*}
\sum_{k=1}^n \frac{\|\Sigma_{\infty}^{1/2}\|_{\mathrm{op}}\mathbb{E}[\|\Sigma_{\infty}^{-1/2} m_k \|_2^{\beta+2} + \| \Sigma_{\infty}^{-1/2} m_k \|_2^{\beta}]}{(n-k+1)^{(1+\beta)/2}} \leq O\left(  n^{(1-\beta)/2} /((1-\gamma)\rho)^{2+\beta}\right)
\end{align*}
and
\begin{align*}
\sum_{k=1}^n\frac{1}{n-k+1} \mathrm{Tr}(M_k(\Sigma_{\infty}^{-1/2}\mathbb{E}[\Sigma_k]\Sigma_{\infty}^{-1/2}-I)) \leq O\left( 1/(1-\gamma)^2\rho^2 \right)
\end{align*}
which completes the proof.

\end{proof}

\section{Proof of Theorem \ref{thm_FCLT}}

Polish space is a separable and complete function space. It is a crucial structure for applying convergence in distribution results such as FCLT. Recall that we denote $\mathcal{D}[0,1]$ as the Skorokhod space. Equipped with the Skorokhod $J_1$ topology with a particular metric \citep{prokhorov1956convergence}, $\mathcal{D}[0,1]$ is a Polish space. We use $\overset{\mathrm{w}}{\rightarrow}$ to denote weak convergence for some sequence of random elements. To prove the theorem, we need the following result.

\begin{proposition}
\label{prop_conv}
For two random sequences $\{X_t\}_{t \geq 0}, \{Y_t\}_{t \geq 0} \subseteq \mathcal{D}[0,1]$ satisfying $\mathbb{E}[ {\sup}_{\kappa\in [0,1]}\|Y_T(\kappa)\| ]\to 0$ and $X_T \xrightarrow{\mathrm{w}} X$, we have $X_T + Y_T \xrightarrow{\mathrm{w}} X$.
\end{proposition}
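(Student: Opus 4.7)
The statement is a Slutsky-type result for the Polish space $(\mathcal{D}[0,1], d_{J_1})$. My plan is to reduce everything to two simple ingredients: (i) the observation that the Skorokhod $J_1$-distance is dominated by the uniform distance, so $d_{J_1}(Y_T, 0) \leq \sup_{\kappa\in[0,1]} \|Y_T(\kappa)\|$; and (ii) the Portmanteau characterization of weak convergence on Polish spaces via bounded Lipschitz test functions. This sidesteps the well-known issue that addition is not jointly $d_{J_1}$-continuous on $\mathcal{D}[0,1]$, which would otherwise prevent a direct application of the continuous mapping theorem.

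Concretely, I would fix $f:\mathcal{D}[0,1]\to\mathbb{R}$ bounded and $1$-Lipschitz with respect to $d_{J_1}$, and decompose
\[
\bigl|\mathbb{E}[f(X_T + Y_T)] - \mathbb{E}[f(X)]\bigr| \leq \mathbb{E}\bigl|f(X_T + Y_T) - f(X_T)\bigr| + \bigl|\mathbb{E}[f(X_T)] - \mathbb{E}[f(X)]\bigr|.
\]
The second term vanishes as $T\to\infty$ by $X_T\overset{\mathrm{w}}{\rightarrow}X$ and the Portmanteau theorem. For the first term, plugging the identity time-change into the definition of $d_{J_1}$ yields the pathwise bound $d_{J_1}(g+h, g) \leq \|h\|_\infty$, which together with $1$-Lipschitzness of $f$ gives
\[
\mathbb{E}|f(X_T + Y_T) - f(X_T)| \leq \mathbb{E}[d_{J_1}(X_T + Y_T, X_T)] \leq \mathbb{E}\Bigl[\sup_{\kappa\in[0,1]} \|Y_T(\kappa)\|\Bigr] \to 0
\]
by hypothesis. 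Since $f$ was an arbitrary bounded Lipschitz function and $\mathcal{D}[0,1]$ is Polish, the Portmanteau theorem concludes $X_T + Y_T \overset{\mathrm{w}}{\rightarrow} X$.

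The main (and essentially only) obstacle is the non-continuity of addition in the $J_1$ topology when summands have jumps at common points, which rules out a naive continuous mapping argument. My plan avoids this issue entirely by never requiring joint continuity of addition: the bound $d_{J_1}(X_T + Y_T, X_T) \leq \sup_\kappa \|Y_T(\kappa)\|$ is a purely metric statement that holds sample-path-wise and requires no regularity of $X_T$ whatsoever. The remainder—verifying the identity-time-change bound and invoking Portmanteau—is routine, and the $L^1$ assumption is used only through its immediate implication that $\mathbb{E}[d_{J_1}(X_T+Y_T, X_T)]\to 0$.
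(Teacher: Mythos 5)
The paper states Proposition \ref{prop_conv} without proof, using it as a standard Slutsky-type (``convergence together'') fact, so there is no internal argument to compare against; your proposal supplies a correct, self-contained proof along exactly the standard lines. The key points all check out: the identity time change gives the pathwise bound $d_{J_1}(g+h,g)\le \sup_{\kappa\in[0,1]}\|h(\kappa)\|$, so the $L^1$ hypothesis yields $\mathbb{E}[d_{J_1}(X_T+Y_T,X_T)]\to 0$; testing against bounded Lipschitz functions is a valid form of the Portmanteau theorem on (separable) metric spaces, and rescaling reduces general bounded Lipschitz $f$ to the $1$-Lipschitz case you fix; and you are right that this sidesteps the failure of joint $J_1$-continuity of addition, which would block a naive continuous-mapping argument. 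Two minor remarks: the same conclusion could be obtained with even less work by applying Markov's inequality to get $d_{J_1}(X_T+Y_T,X_T)\to 0$ in probability and then citing the convergence-together theorem (Billingsley, \emph{Convergence of Probability Measures}, Theorem 3.1), so your $L^1$-to-expectation step is the only place the stronger moment hypothesis is used; and the uniform-norm domination also holds for the complete metric $d^{\circ}$ that makes $\mathcal{D}[0,1]$ Polish (the identity time change has zero log-slope), so the argument is insensitive to which of the two standard $J_1$ metrics is meant in the paper.
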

Now we prove Theorem \ref{thm_FCLT}.
\begin{proof}
For $\zeta\in [0,1]$, by a similar decomposition as in \cref{eqn_decompose}, we have
\begin{align*}
\Phi_{K}^{\uparrow}(\zeta) &:=  \frac{1}{\sqrt{K}}\sum_{k=1}^{\lfloor \zeta K \rfloor}\Delta^{\uparrow}_{k} =  \frac{1}{\sqrt{K}} \sum_{i=1}^6 \phi_i(\zeta)\\
&= \frac{1}{\sqrt{K}} \sum_{i=1}^5 \phi_i(\zeta) +  \frac{1}{\sqrt{K}}\sum_{k=1}^{\lfloor \zeta K \rfloor} A^{-1}(X_{k-1}(Y_{k}) - \mathbb{E}[X_{k-1}(Y_{k})| Y_{k-1}]).
\end{align*}
From the proof of Theorem \ref{thm_CLT}, we know $\sup_{\zeta\in [0,1]}\left\| \frac{1}{\sqrt{K}}\phi_i(\zeta)\right\|_{\infty} = o(1)$ for $i \in \{1,2,3,4,5\}$. Let $X$ and $\Sigma$ as defined in the proof of Lemma \ref{lem_mds}. The following lemma, which establishes the FCLT for $\frac{1}{\sqrt{K}}\phi_6(\zeta)$, is a direct consequence of Theorem 4.2 in \citet{hall2014martingale}.
\begin{lemma}
For any $\zeta\in [0,1]$,
\begin{align*}
\frac{1}{\sqrt{K}}\sum_{k=1}^{\lfloor \zeta K \rfloor} A^{-1}(X_{k-1}(Y_{k}) - \mathbb{E}[X_{k-1}(Y_{k})| Y_{k-1}]) \overset{\mathrm{w}}{\rightarrow} (A^{-1}\Sigma A^{-\top})^{1/2}\textbf{B}(\zeta)
\end{align*}
where $\textbf{B}$ is the standard Brownian motion and 
$\Sigma := \sum_{i,j\in \tilde{S}} \tilde{\mu}(i)  \tilde{P}(i,j) (X(j) - \mathbb{E}[X(Y_1)|Y_0=i])(X(j) - \mathbb{E}[X(Y_1)|Y_0=i])^{\top}.$
\end{lemma}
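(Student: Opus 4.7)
The plan is to verify the two standard hypotheses of the Hall--Heyde martingale functional CLT (their Theorem 4.2) for the sequence
\[
M_k := A^{-1}\bigl(X_{k-1}(Y_k) - \mathbb{E}[X_{k-1}(Y_k)\mid Y_{k-1}]\bigr), \qquad k \geq 1,
\]
with respect to the natural filtration $\mathcal{F}_k = \sigma(Y_0,\ldots,Y_k, Q_0,\ldots,Q_k)$. First I would check the martingale structure: since $X_{k-1}$ depends only on $Q_{k-1}$ and the Poisson kernel $[I-\tilde{P}-\mathbf{1}\tilde{\mu}^\top]^{-1}$, it is $\mathcal{F}_{k-1}$-measurable, and by construction $\mathbb{E}[M_k \mid \mathcal{F}_{k-1}]=0$. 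Next I would recall the uniform boundedness $\|X_k\|_\infty \leq O(1/((1-\gamma)(1-\kappa)))$ established in the proof of Theorem~\ref{thm_CLT}, combined with $\|A^{-1}\|_\infty \leq 1/((1-\gamma)\rho)$, which gives $\|M_k\|_\infty \leq C$ deterministically for some finite $C$.

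The Lindeberg condition reduces to showing $\frac{1}{K}\sum_{k=1}^{K} \mathbb{E}[\|M_k\|^2 \mathbbm{1}\{\|M_k\|>\varepsilon\sqrt{K}\}\mid\mathcal{F}_{k-1}]\xrightarrow{\,p\,} 0$ for every $\varepsilon>0$, and this is immediate from the deterministic bound $\|M_k\|\leq C$: for $K$ large enough the indicator is identically zero. The second hypothesis is convergence of the conditional covariance process
\[
V_K(\zeta) := \frac{1}{K}\sum_{k=1}^{\lfloor \zeta K \rfloor} \mathbb{E}[M_k M_k^\top \mid \mathcal{F}_{k-1}] \xrightarrow{\,p\,} \zeta \cdot A^{-1}\Sigma A^{-\top}, \qquad \zeta\in[0,1].
\]
Using the notation $\tilde{\Sigma}_k$ introduced right after Lemma~\ref{lem_mds}, one has $\mathbb{E}[M_k M_k^\top \mid \mathcal{F}_{k-1}] = A^{-1}\tilde{\Sigma}_k(Y_{k-1})A^{-\top}$, where $\tilde{\Sigma}_k(y) := \sum_{j}\tilde{P}(y,j)(X_{k-1}(j)-\mathbb{E}[X_{k-1}(Y_k)\mid Y_{k-1}=y])(\cdot)^\top$.

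To handle the time-varying $X_{k-1}$, I would split $V_K(\zeta)$ into a ``frozen'' term using the limiting Poisson solution $X$ and a remainder. Concretely, define $\Sigma^\star(y) := \sum_j \tilde{P}(y,j)(X(j)-\mathbb{E}[X(Y_1)\mid Y_0=y])(\cdot)^\top$; then
\[
V_K(\zeta) \;=\; \frac{1}{K}\sum_{k=1}^{\lfloor\zeta K\rfloor} A^{-1}\Sigma^\star(Y_{k-1})A^{-\top} \;+\; \frac{1}{K}\sum_{k=1}^{\lfloor\zeta K\rfloor} A^{-1}\bigl(\tilde{\Sigma}_k(Y_{k-1})-\Sigma^\star(Y_{k-1})\bigr) A^{-\top}.
\]
The first sum converges in probability to $\zeta \sum_{y}\tilde{\mu}(y) A^{-1}\Sigma^\star(y)A^{-\top} = \zeta\cdot A^{-1}\Sigma A^{-\top}$ by the ergodic theorem for irreducible aperiodic finite Markov chains (Assumption~\ref{asm_2}). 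The second sum is controlled using the Lipschitzness property $\|X_k-X\|_\infty \leq \frac{2c_0}{1-\kappa}\|Q_k-Q^*\|_\infty$ established earlier, together with the finite-sample bound $\|Q_k-Q^*\|_\infty = \tilde{O}(\sqrt{t_k/((1-\gamma)^2\rho k)})\to 0$ from Lemma~\ref{lem_t4}; since $\tilde{\Sigma}_k-\Sigma^\star$ is a quadratic expression in $X_{k-1}$ and $X$, its operator norm is $O(\|X_{k-1}-X\|_\infty)$, which is summable-average vanishing, so the remainder goes to $0$ in probability uniformly in $\zeta$.

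With both Hall--Heyde conditions verified, Theorem~4.2 of \cite{hall2014martingale} yields weak convergence of the cadlag partial-sum process to $(A^{-1}\Sigma A^{-\top})^{1/2}\mathbf{B}(\cdot)$ on $\mathcal{D}[0,1]$, and in particular the stated marginal convergence at each $\zeta \in [0,1]$. The main technical obstacle I anticipate is the conditional covariance step: handling the joint fluctuation of the Markovian index $Y_{k-1}$ and the random, time-varying function $X_{k-1}$ requires coupling the Markov-chain ergodic average with the pathwise convergence $X_{k-1}\to X$, which I would make quantitative via the geometric mixing bound $\|\tilde{P}^t(i,\cdot)-\tilde{\mu}(\cdot)\|_{\mathrm{TV}}\leq c_0\kappa^t$ combined with a standard blocking/Abel-summation argument.
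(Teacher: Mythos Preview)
Your proposal is correct and takes exactly the same route as the paper: the paper records this lemma as ``a direct consequence of Theorem 4.2 in \cite{hall2014martingale}'' with no further argument, and what you have written is precisely the verification of the two Hall--Heyde hypotheses (Lindeberg via the deterministic bound on $\|M_k\|$, and conditional-covariance convergence via the ergodic theorem plus the Lipschitz estimate $\|X_{k-1}-X\|_\infty\lesssim\|Q_{k-1}-Q^*\|_\infty$) that the paper omits. Your outline is in fact more detailed than the paper's own treatment.
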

Thus, we have $\frac{1}{\sqrt{K}}\phi_6(\cdot) \overset{\mathrm{w}}{\rightarrow} (A^{-1}\Sigma A^{-\top})^{1/2}\textbf{B}(\cdot)$. Besides, we observe that
\begin{align*}
\sup_{\zeta \in [0,1]}\left\|\Phi_{K}^{\uparrow}(\zeta) - \frac{1}{\sqrt{K}}\phi_6 (\zeta)\right\|_{\infty} \leq \sum_{i=1}^5 \sup_{\zeta \in [0,1]}\left\| \frac{1}{\sqrt{K}}\phi_i(\zeta)\right\|_{\infty} = o(1),
\end{align*}
which implies $\Phi_{K}^{\uparrow}(\cdot) \overset{\mathrm{w}}{\rightarrow} (A^{-1}\Sigma A^{-\top})^{1/2}\textbf{B}(\cdot)$ by Proposition \ref{prop_conv}. The FCLT for $\Phi_{K}^{\downarrow}(\cdot) := \frac{1}{\sqrt{K}}\sum_{k=1}^{\lfloor \cdot K \rfloor} \Delta_k^{\downarrow}$ can be established in the same way. Therefore, by the sandwich inequality, we have
\begin{align*}
&\sup_{\zeta \in [0,1]}\left\|\Phi_{K}(\zeta) - (A^{-1}\Sigma A^{-\top})^{1/2}\textbf{B}(\zeta)\right\|_{\infty} \\
&\leq \sup_{\zeta \in [0,1]}\left\|\Phi_{K}^{\uparrow}(\zeta) - (A^{-1}\Sigma A^{-\top})^{1/2}\textbf{B}(\zeta)\right\|_{\infty}  + \sup_{\zeta \in [0,1]}\left\|\Phi_{K}^{\downarrow}(\zeta) - (A^{-1}\Sigma A^{-\top})^{1/2}\textbf{B}(\zeta)\right\|_{\infty}  = o(1),
\end{align*}
which implies that $\Phi_{K}(\cdot) \overset{\mathrm{w}}{\rightarrow} (A^{-1}\Sigma A^{-\top})^{1/2}\textbf{B}(\cdot)$. This completes the proof.
    
\end{proof}

\section{Supporting Lemmas}

In this section we present several supporting lemmas. Lemma \ref{lem_t1} and \ref{lem_difpsi} analyze Term (1) and $\Psi_i^K$ appeared in the proof of Theorem \ref{thm_CLT}. Next, by leveraging the results in \citet{chen2021lyapunov}, Lemma \ref{lem_t4} gives a non-asymptotic convergence rate for $\Delta_k = Q_k - Q^*$ under asynchronous updates. Lastly, Lemma \ref{lem_lipF} provides a Lipschitz property for the operator $F(\cdot,s)$ defined in \cref{update_F}.

\begin{lemma}
\label{lem_t1}
Let $\alpha_i = \alpha{(i + b)^{-\beta}}$ for some problem-dependent constants $\alpha, b > 0$ and $\beta \in (0,1)$. Then the following bounds hold:
\begin{align*}
\left\| \sum_{k=1}^K \prod_{i=0}^{k-1}(I - \alpha_i A) \right\|_{\infty} &\leq O\left( {( \rho(1-\gamma))^{\frac{-1}{1-\beta}}} \right), \\
\sum_{i=1}^K\|\Psi_i^K - A^{-1}\|_{\infty} &\leq \tilde{O}\left( {(\rho(1-\gamma))^{\frac{\beta-2}{1-\beta}}}  + \frac{K^{\beta}}{\rho^2(1-\gamma)^2} \right).
\end{align*}
\end{lemma}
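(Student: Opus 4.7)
The plan hinges on the row-wise contraction $\|I-\alpha_i A\|_\infty \le 1 - \alpha_i\rho(1-\gamma)$. To see this, I would factor $A = D(I-\gamma P^{\pi^*})$ where $D$ is diagonal with entries $p(s,a)\ge \rho$ and $P^{\pi^*}$ is row-stochastic. Provided $\alpha_i \le 1$, every entry of $I-\alpha_i A$ is nonnegative, and the $(s,a)$-row sum equals $1 - \alpha_i p(s,a) + \alpha_i \gamma p(s,a) = 1 - \alpha_i p(s,a)(1-\gamma)$, which is maximized at $p(s,a)=\rho$.

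For the first bound, the contraction and $1-x\le e^{-x}$ give $\|\prod_{i=0}^{k-1}(I-\alpha_i A)\|_\infty\le \exp(-\rho(1-\gamma)\sum_{i=0}^{k-1}\alpha_i)$. With $\alpha_i = \alpha(i+b)^{-\beta}$, a comparison-with-integral argument yields $\sum_{i=0}^{k-1}\alpha_i \ge c_1 k^{1-\beta}$ for a constant $c_1>0$ depending on $\alpha,\beta,b$. Summing $\sum_{k=1}^K e^{-c_1\rho(1-\gamma)k^{1-\beta}}$ via the substitution $u=c_1\rho(1-\gamma)k^{1-\beta}$ turns the sum into an incomplete Gamma integral of order $1/(1-\beta)$, giving $O((\rho(1-\gamma))^{-1/(1-\beta)})$.

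For the second bound, introduce $R_k^i := \prod_{j=i+1}^{k-1}(I-\alpha_j A)$, which satisfies the recurrence $R_{k+1}^i - R_k^i = -\alpha_k R_k^i A$. Telescoping, and using that $A$ commutes with every $R_k^i$, yields $A^{-1}(I-R_{K+1}^i) = \sum_{k=i+1}^K \alpha_k R_k^i$. Subtracting this from $\Psi_i^K = \alpha_i\sum_{k=i+1}^K R_k^i$ gives the key algebraic decomposition
\[
\Psi_i^K - A^{-1} \;=\; -A^{-1}R_{K+1}^i \;+\; \sum_{k=i+1}^K(\alpha_i-\alpha_k)R_k^i.
\]
The truncation part is bounded by $(\rho(1-\gamma))^{-1}(1-\rho(1-\gamma)\alpha_K)^{K-i}$, and summing the resulting geometric series over $i$ produces $O(1/((\rho(1-\gamma))^2\alpha_K)) = O(K^\beta/(\rho(1-\gamma))^2)$, the second term of the claim. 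For the step-size mismatch, I would use the mean-value estimate $|\alpha_i-\alpha_k|\le c_2(k-i)(i+b)^{-\beta-1}$ paired with the sub-product decay $\|R_k^i\|_\infty\le\exp(-c_3\rho(1-\gamma)(k-i)(i+b)^{-\beta})$, reducing the inner sum to a Laplace-type integral $\int m\,i^{-\beta-1}e^{-c_3\rho(1-\gamma)m\,i^{-\beta}}\,dm = O(i^{\beta-1}/(\rho(1-\gamma))^2)$ in the regime $i\gtrsim(\rho(1-\gamma))^{-1/(1-\beta)}$; summing in $i$ again contributes $O(K^\beta/(\rho(1-\gamma))^2)$. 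In the complementary small-$i$ range $i\lesssim(\rho(1-\gamma))^{-1/(1-\beta)}$, I would instead combine the crude bound $|\alpha_i-\alpha_k|\le\alpha_i$ with the first bound of this very lemma applied to $\sum_{k}\|R_k^i\|_\infty$, obtaining a per-$i$ contribution of $\alpha_i(\rho(1-\gamma))^{-1/(1-\beta)}$; summing over this initial range yields exactly the $\tilde O((\rho(1-\gamma))^{-(2-\beta)/(1-\beta)})$ piece after using $\sum_{i\le N} i^{-\beta} \sim N^{1-\beta}$ with $N=(\rho(1-\gamma))^{-1/(1-\beta)}$, which produces the missing $(\rho(1-\gamma))^{-1}$ factor needed to reach the exponent $(2-\beta)/(1-\beta) = 1 + 1/(1-\beta)$.

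The main obstacle will be the regime splitting in the mismatch term. The effective relaxation scale $m^*\sim i^\beta/(\rho(1-\gamma))$ is smaller than $i$ precisely when $i\gtrsim(\rho(1-\gamma))^{-1/(1-\beta)}$, so the two sub-analyses (mean-value bound versus saturation bound) must be stitched at this crossover without double-counting or losing the precise exponent. Making the incomplete-Gamma integrals and the comparison-to-integral estimates tight enough to land exactly on $(\rho(1-\gamma))^{-(2-\beta)/(1-\beta)} + K^\beta/(\rho(1-\gamma))^2$, rather than some strictly weaker polynomial, is the delicate part.
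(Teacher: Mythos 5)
Your proposal follows essentially the same route as the paper's proof. The first display is handled identically (row-sum contraction $\|I-\alpha_i A\|_\infty\le 1-\alpha_i\rho(1-\gamma)$, then $1-x\le e^{-x}$, an integral comparison, and a Gamma-integral substitution), and your telescoping identity $\Psi_i^K-A^{-1}=\sum_{k=i+1}^K(\alpha_i-\alpha_k)R_k^i-A^{-1}R_{K+1}^i$ is exactly the paper's decomposition into a step-size-mismatch term and a truncation term, with the truncation term summed as a geometric series to give the $K^{\beta}/(\rho^2(1-\gamma)^2)$ contribution in both cases. The only real divergence is bookkeeping for the mismatch term: the paper telescopes $|\alpha_i-\alpha_t|\le\sum_k|\alpha_{k+1}-\alpha_k|$, uses $(\alpha_k-\alpha_{k+1})/\alpha_k\le\beta/k$, and invokes an auxiliary double-sum lemma, whereas you use a mean-value bound on $\alpha_i-\alpha_k$ and a Laplace-type integral with a split at $i\sim(\rho(1-\gamma))^{-1/(1-\beta)}$; both land on the same per-$i$ rate and hence the same summed bound. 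One step needs repair: the claimed decay $\|R_k^i\|_\infty\le\exp\bigl(-c_3\rho(1-\gamma)(k-i)(i+b)^{-\beta}\bigr)$ is not justified when $k-i\gg i$, since $\alpha_j\le\alpha_i$ for $j\ge i$ and $\sum_{j=i+1}^{k-1}\alpha_j$ behaves like $(k+b)^{1-\beta}-(i+b)^{1-\beta}$ rather than $(k-i)(i+b)^{-\beta}$, so the exponent you write overstates the available decay there. The estimate is valid for $k\lesssim 2i$, which does cover the dominant scale $k-i\sim i^{\beta}/(\rho(1-\gamma))\lesssim i$ in your large-$i$ regime; for $k\gtrsim 2i$ you should switch to the crude bound $|\alpha_i-\alpha_k|\le\alpha_i$ combined with the Gamma-tail estimate from the first part, whose contribution is exponentially small in $\rho(1-\gamma)i^{1-\beta}$ and is absorbed into the $(\rho(1-\gamma))^{-(2-\beta)/(1-\beta)}$ term. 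With that patch your argument delivers both stated bounds.
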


\begin{proof}
The analysis of polynomial step sizes has been well studied in prior work (see, e.g., \citet{polyak1992acceleration,srikant2024rates,li2023statistical}). However, due to a slightly modified choice of the step-size and the different update rule in the asynchronous setting, we provide a complete proof for the sake of completeness.
Recall that $\alpha_i = \alpha{(i+b)^{-\beta}}$ and $\rho:=\underset{(s,a)\in\mathcal{S}\times \mathcal{A}}{\min} p(s,a)$. We now have
\begin{align*}
\left\| \sum_{k=1}^K \prod_{i=0}^{k-1}(I - \alpha_i A) \right\|_{\infty} &= \left\|  \sum_{k=1}^K \prod_{i=0}^{k-1}(I - \alpha_i (D-\gamma DP^{\pi^*})) \right\|_{\infty}\\
&\leq \sum_{k=1}^K \prod_{i=0}^{k-1}( 1-\alpha_i \rho(1-\gamma))\\
&=  \sum_{k=1}^K \prod_{i=0}^{k-1}\left( 1- \frac{\alpha \rho(1-\gamma)}{(i+b)^{\beta}} \right)\\
&\leq  \sum_{k=1}^K \exp\left( -\alpha \rho(1-\gamma) \sum_{i=0}^{k-1}  (i+b)^{-\beta} \right) \tag{$1-x\leq\exp(-x)$}.\\
\intertext{For $\beta \in (0,1)$, we have $\sum_{i=0}^{k-1}  (i+b)^{-\beta} \geq \int_{0}^{k-1} (x+b)^{-\beta}dx=\frac{(k-1+b)^{1-\beta}+b^{1-\beta}}{1-\alpha}$, }
\left\| \sum_{k=1}^K \prod_{i=0}^{k-1}(I - \alpha_i A) \right\|_{\infty} &\leq \sum_{k=1}^K \exp\left( -\alpha \rho(1-\gamma) \frac{(k-1+b)^{1-\beta}+b^{1-\beta}}{1-\alpha} \right) \\
&\leq \int_{1}^{\infty} \exp\left( -\alpha \rho(1-\gamma) \frac{(k-1+b)^{1-\beta}+b^{1-\beta}}{1-\alpha} \right) dk
\intertext{by the change of variable $u=-\alpha \rho(1-\gamma) \frac{(k-1+b)^{1-\beta}+b^{1-\beta}}{1-\alpha}$,}
&\leq \frac{1}{\alpha \rho(1-\gamma)} \int_{0}^{\infty} \left( { \frac{(1-\beta)u}{\alpha \rho(1-\gamma)}+b^{1-\beta}} \right)^{\frac{\beta}{1-\beta}} \exp(-u) du\\
&\leq \frac{\max\{2^{\frac{\beta}{1-\beta}},1\}}{\alpha \rho(1-\gamma)} \int_{0}^{\infty} \left( { \left(\frac{(1-\beta)u}{\alpha \rho(1-\gamma)} \right)^{\frac{\beta}{1-\beta}} +b^{\beta}} \right) \exp(-u) du.
\intertext{Since $\int_0^{\infty}\exp(-u)du=1$ and $\int_0^{\infty}u^{\frac{\beta}{1-\beta}}\exp(-u)du=\Gamma(\frac{1}{1-\beta})\leq \frac{\sqrt{2\pi e}}{\sqrt{1-\beta}}(\frac{1}{1-\beta})^{\frac{\beta}{1-\beta}}$,}
\left\| \sum_{k=1}^K \prod_{i=0}^{k-1}(I - \alpha_i A) \right\|_{\infty} &\leq \frac{\max\{2^{\frac{\beta}{1-\beta}},1\}}{\alpha \rho(1-\gamma)} \left( { \left(\frac{1}{\alpha \rho(1-\gamma)} \right)^{\frac{\beta}{1-\beta}} \frac{\sqrt{2\pi e}}{\sqrt{1-\beta}} +b^{\beta}} \right) \\
&\leq O\left( \frac{1}{(\alpha \rho(1-\gamma))^{\frac{1}{1-\beta}}(1-\beta)^{\frac{1}{2}}} \right).
\end{align*}
Next, we prove the second part. Recall $\Psi_i^K= \alpha_i \sum_{k=i+1}^{K} \left( \prod_{j=i+1}^{k-1} (I - \alpha_j A) \right)$. Since $A^{-1}= \alpha_i^{-1}(I-(I-\alpha_i A))$, we have
\begin{align}
\Psi_i^K - A^{-1} &= (\Psi_i^K A - I)A^{-1} \nonumber \\
&= \left(\sum_{t=i+1}^K \left(\prod_{j=i+1}^{t-1} (I - \alpha_j A) - \prod_{j=i}^{t-1} (I - \alpha_j A)\right) A^{-1} - A^{-1}\right) \nonumber \\
&= \sum_{t=i+1}^K \left(\left(\prod_{j=i+1}^{t-1} (I - \alpha_j A) - \prod_{j=i}^{t-2} (I - \alpha_j A)\right) A^{-1}\right) - \left( \prod_{j=i}^{K} (I - \alpha_j A)\right) A^{-1} \nonumber \\
&= \underbrace{\sum_{t=i+1}^K (\alpha_i - \alpha_t) \prod_{j=i+1}^{t-2} (I - \alpha_j A)}_{T_1} - \underbrace{\left( \prod_{j=i}^{K} (I - \alpha_j A)\right) A^{-1}}_{T_2} .\label{step_ineq1}
\end{align}
For $T_1$, since $A=D-\gamma D P^{\pi^*}$ and $1-x \leq \exp(-x)$, we have
\begin{align}
\| T_1 \|_{\infty} &= \left\| \sum_{t=i+1}^K (\alpha_i - \alpha_t) \prod_{j=i+1}^{t-2} (I - \alpha_j A) \right\|_{\infty} \nonumber\\
&\leq \sum_{t=i+1}^K |\alpha_i - \alpha_t| \exp\left(-\sum_{j=i+1}^{t-2} \rho(1-\gamma)\alpha_j\right) \nonumber \\
&\leq \sum_{t=i+1}^K \sum_{k=i}^{t-1} |\alpha_{k+1} - \alpha_k| \exp\left(-\sum_{j=i+1}^{t-2} \rho(1-\gamma)\alpha_j\right). \nonumber
\intertext{Note that $\frac{\alpha_{k} - \alpha_{k+1}}{\alpha_{k}}=1-\left( 1- \frac{1}{k+1+b} \right)^{\beta} \leq 1- \exp(-\frac{\beta}{k+1+b}) \leq \frac{\beta}{k}$,}
\| T_1 \|_{\infty} &\leq \sum_{t=i+1}^K \sum_{k=i}^{t-1} \frac{\beta \alpha_k}{k} \exp\left(-\sum_{j=i+1}^{t-2} \rho(1-\gamma)\alpha_j\right) \nonumber \\
&\leq \frac{\beta}{\rho(1-\gamma)i} \sum_{t=i+1}^K \sum_{k=i}^{t-1}  \rho(1-\gamma) \alpha_k \exp\left(-\sum_{j=i+1}^{t-2} \rho(1-\gamma)\alpha_j\right) \nonumber \\
&\leq O\left( \frac{1}{i(\rho(1-\gamma))^{\frac{2-\beta}{1-\beta}}}  + \frac{(i-1)^{\beta}}{i\rho^2(1-\gamma)^2}   \right) \label{step_ineq2}.
\end{align}
For $T_2$, we obtain
\begin{align}
\| T_2 \|_{\infty} &= \left\| \left( \prod_{j=i}^{K} (I - \alpha_j A)\right) A^{-1} \right\|_{\infty} \nonumber \leq \| A^{-1}\|_{\infty} \prod_{j=i}^{K} \| I - \alpha_j A \|_{\infty} \nonumber \\
&\leq \frac{\prod_{j=i}^{K}(1-\rho(1-\gamma)\alpha_j)}{\rho(1-\gamma)}  \leq \frac{(1-\rho(1-\gamma)\alpha_K)^{K-i+1}}{\rho(1-\gamma)} \label{step_ineq3}.
\end{align}
Combining \cref{step_ineq1,step_ineq2,step_ineq3}, we have 
\begin{align}
\|\Psi_i^K - A^{-1}\|_{\infty}  = O\left( \frac{1}{i(\rho(1-\gamma))^{\frac{2-\beta}{1-\beta}}}  + \frac{(i-1)^{\beta}}{i\rho^2(1-\gamma)^2} + \frac{(1-\rho(1-\gamma)\alpha_K)^{K-i+1}}{\rho(1-\gamma)}  \right). \label{eqn_psiA}
\end{align}
Therefore,
\begin{align*}
\sum_{i=1}^K\|\Psi_i^K - A^{-1}\|_{\infty}  \leq  \tilde{O}\left( \frac{1}{(\rho(1-\gamma))^{\frac{2-\beta}{1-\beta}}}  + \frac{K^{\beta}}{\rho^2(1-\gamma)^2} \right). 
\end{align*}
\end{proof}

\begin{lemma}
Let $\alpha_i = \alpha{(i + b)^{-\beta}}$ for some problem-dependent constants $\alpha, b > 0$ and $\beta \in (0,1)$. It follows that
\begin{align*}
\sum_{t=i+1}^K \sum_{k=i}^{t-1}  \rho(1-\gamma) \alpha_k \exp\left(-\sum_{j=i+1}^{t-2} \rho(1-\gamma)\alpha_j\right) \leq O\left( \frac{1}{(\rho(1-\gamma))^{\frac{1}{1-\beta}}}  + \frac{(i-1)^{\beta}}{\rho(1-\gamma)}   \right).
\end{align*}
\end{lemma}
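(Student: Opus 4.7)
The plan is to bound the double sum by splitting the inner summation and then estimating each piece via an integral approximation followed by the change of variables already used in the proof of Lemma~\ref{lem_t1}. Write $c := \rho(1-\gamma)$ and $S_i^t := c \sum_{j=i+1}^{t-2} \alpha_j$. The algebraic identity $c\sum_{k=i}^{t-1}\alpha_k = c\alpha_i + S_i^t + c\alpha_{t-1}$ rewrites the original double sum as
$$c\alpha_i \sum_{t=i+1}^K e^{-S_i^t} \;+\; \sum_{t=i+1}^K S_i^t\,e^{-S_i^t} \;+\; c\sum_{t=i+1}^K \alpha_{t-1}\,e^{-S_i^t},$$
so the task reduces to estimating three one-dimensional sums.

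For each of these I would pass to an integral (the summand is unimodal in $t$) and apply the substitution $u = c\alpha\bigl((t+b)^{1-\beta} - (i+1+b)^{1-\beta}\bigr)/(1-\beta)$, which is exactly the change of variable from the second part of Lemma~\ref{lem_t1}. The third sum is immediate: since $du = c\alpha(t+b)^{-\beta}\,dt$, the Jacobian cancels $c\alpha_{t-1}$ and the integral collapses to $\int_0^\infty e^{-u}\,du \leq 1$. For the other two, the Jacobian produces a factor $(t+b)^\beta = \bigl((1-\beta)u/(c\alpha) + (i+1+b)^{1-\beta}\bigr)^{\beta/(1-\beta)}$. Splitting it via $(a+b)^\gamma \leq C_\gamma(a^\gamma + b^\gamma)$ with $\gamma = \beta/(1-\beta)$ separates each resulting integral into one piece proportional to $u^{\gamma}$ and one proportional to $(i+1+b)^\beta$, both of which become finite Gamma-function moments of $e^{-u}$ or $u\,e^{-u}$.

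Putting the bounds back together, the middle sum $\sum_t S_i^t e^{-S_i^t}$ contributes $O\bigl((\rho(1-\gamma))^{-1/(1-\beta)}\bigr) + O\bigl(i^\beta/(\rho(1-\gamma))\bigr)$, which is exactly the advertised rate. The first sum has the extra prefactor $c\alpha_i \leq c\alpha/i^\beta$, which makes it strictly subdominant, and the third sum is $O(1)$. Adding these yields the stated inequality, using that $(i-1)^\beta$ and $i^\beta$ are interchangeable up to an absolute constant for $i \geq 2$ (and the $i=1$ case is absorbed into the first term of the bound).

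The main obstacle will be handling the off-by-one index shifts cleanly: the exponent uses $S_i^t$, which sums over $j\in[i+1,t-2]$, while the inner weight $\alpha_k$ ranges over $k\in[i,t-1]$, so the identity and the subsequent integral approximations must be tracked carefully at the boundary terms $t\in\{i+1,i+2\}$, where the exponent vanishes and the change of variable degenerates. A secondary technicality is that the Riemann-sum approximation of $\sum_{j=i+1}^{t-2}(j+b)^{-\beta}$ needs two-sided integral bounds so that $S_i^t$ is not grossly underestimated in the exponential; this only affects constants since $1-x \leq e^{-x}$ provides the upper envelope used throughout.
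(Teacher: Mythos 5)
Your plan is correct and reaches the same final bound, but the initial decomposition differs from the paper's. You split the weight algebraically, $c\sum_{k=i}^{t-1}\alpha_k = c\alpha_i + S_i^t + c\alpha_{t-1}$, and then control the three resulting one-dimensional sums separately; the paper instead keeps the weight intact and uses a two-sided Riemann sandwich, noting that $u := \frac{c\alpha}{1-\beta}\bigl((t-1)^{1-\beta}-(i-1)^{1-\beta}\bigr)$ simultaneously upper-bounds $c\sum_{k=i}^{t-1}\alpha_k$ and lower-bounds $c\sum_{k=i-1}^{t-2}\alpha_k$, after shifting the exponent's range from $[i+1,t-2]$ to $[i-1,t-2]$ at the cost of a factor $\exp\bigl(c(\alpha_i+\alpha_{i-1})\bigr)=O(1)$. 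That sandwich collapses everything to a single $\sum_t u\,e^{-u}$, to which the same integral approximation, change of variables, and $(a+b)^{\beta/(1-\beta)}$ split that you describe are applied. Both routes end at identical Gamma-function moments. The paper's is more compressed (one integral); yours is more transparent because it separates the $i$-boundary, bulk, and $t$-boundary contributions, and the telescoping of the third sum to $\int e^{-u}\,du = O(1)$ is a clean elementary step. Your flagged concern about $t\in\{i+1,i+2\}$ is minor (those terms contribute $O(c\alpha_i)=O(1)$). One point to check when writing it out: bounding $\sum_t e^{-S_i^t}$ produces an extra $(i+b)^\beta/(c\alpha)$ term in addition to $(c\alpha)^{-1/(1-\beta)}$, so the first sum contributes $c\alpha_i\cdot O\bigl((c\alpha)^{-1/(1-\beta)} + (i+b)^\beta/(c\alpha)\bigr) = O\bigl(\alpha_i\,(c\alpha)^{-\beta/(1-\beta)} + 1\bigr)$; this is still subdominant to the claimed rate since $\alpha_i c\le 1$, but it is a line of bookkeeping your sketch glosses over.
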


\begin{proof}
Since
\begin{align}
\sum_{k=i}^{t-1}  \rho(1-\gamma) \alpha_k \leq \frac{\rho\alpha(1-\gamma)}{1-\beta} ((t-1)^{1-\beta}-(i-1)^{1-\beta}) \leq \sum_{k=i-1}^{t-2}  \rho(1-\gamma) \alpha_k, \label{ine_2s}
\end{align}
we have
\begin{align*}
&\sum_{t=i+1}^K \sum_{k=i}^{t-1}  \rho(1-\gamma) \alpha_k \exp\left(-\sum_{j=i+1}^{t-2} \rho(1-\gamma)\alpha_j\right)\\
&= \sum_{t=i+1}^K \sum_{k=i}^{t-1}  \rho(1-\gamma) \alpha_k \exp\left(-\sum_{j=i-1}^{t-2} \rho(1-\gamma)\alpha_j\right)\exp\left( \rho(1-\gamma)(\alpha_i+\alpha_{i-1})\right)\\
&\leq e \sum_{t=i+1}^K \sum_{k=i}^{t-1}  \rho(1-\gamma) \alpha_k \exp\left(-\sum_{j=i-1}^{t-2} \rho(1-\gamma)\alpha_j\right)\\
&\leq e \sum_{t=i+1}^K u \exp\left(-u\right) \tag{let $u=\frac{\rho\alpha(1-\gamma)}{1-\beta} ((t-1)^{1-\beta}-(i-1)^{1-\beta})$ and by \cref{ine_2s}}\\
&\leq e \int_{0}^{\infty} u \exp\left(-u\right) \frac{1}{\rho\alpha(1-\gamma)} \left( \frac{1-\beta}{\rho\alpha(1-\gamma)} u + (i-1)^{1-\beta} \right)^{\frac{\beta}{1-\beta}} dt\\
&\leq \frac{e\max\{2^{\frac{\beta}{1-\beta}},2\}}{\rho\alpha(1-\gamma)} \int_{0}^{\infty} u \exp\left(-u\right) \left( \left(\frac{1-\beta}{\rho\alpha(1-\gamma)} u \right)^{\frac{\beta}{1-\beta}} + (i-1)^{\beta} \right) dt\\
&\leq \frac{e2^{\frac{1}{1-\beta}}}{\rho\alpha(1-\gamma)}  \left( \left(\frac{1-\beta}{\rho\alpha(1-\gamma)}  \right)^{\frac{\beta}{1-\beta}}\Gamma\left(1+\frac{1}{1-\beta}\right) + (i-1)^{\beta} \right) \\
&\leq O\left( \frac{1}{(\rho(1-\gamma))^{\frac{1}{1-\beta}}}  + \frac{(i-1)^{\beta}}{\rho(1-\gamma)}   \right).
\end{align*}
\end{proof}

\begin{lemma}
\label{lem_difpsi}
Let $\Psi_k^K= \alpha_k \sum_{i=k+1}^{K} \left( \prod_{j=k+1}^{i-1} (I - \alpha_j A) \right)$. For $k \geq (\rho(1-\gamma))^{-1/\beta(1-\beta)}$, we have
\begin{align*}
\left\| \Psi_{k+1}^K - \Psi_{k}^K \right\|_{\infty} \leq O\left( \frac{1}{k^{\beta}}\right).
\end{align*}
\end{lemma}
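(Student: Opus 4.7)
My plan is to recognize that $\Psi_k^K$ satisfies a one-step recursion in the index $k$, rearrange it to isolate $\Psi_{k+1}^K - \Psi_k^K$, and then bound the two resulting pieces using estimates already proved in Lemma~\ref{lem_t1}. Concretely, by peeling off the $i=k+1$ term (which contributes $\alpha_k I$) and factoring the common factor $I-\alpha_{k+1}A$ out of the remaining terms in the sum defining $\Psi_k^K$, one obtains
\[
\Psi_k^K \;=\; \alpha_k I \;+\; \frac{\alpha_k}{\alpha_{k+1}}\bigl(I-\alpha_{k+1}A\bigr)\,\Psi_{k+1}^K,
\]
and rearranging yields the two-piece decomposition
\[
\Psi_{k+1}^K - \Psi_k^K \;=\; \underbrace{\frac{\alpha_{k+1}-\alpha_k}{\alpha_{k+1}}\,\Psi_{k+1}^K}_{(\mathrm{I})} \;+\; \underbrace{\alpha_k\bigl(A\Psi_{k+1}^K - I\bigr)}_{(\mathrm{II})}.
\]

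For $(\mathrm{I})$, the stepsize ratio obeys $\bigl|\tfrac{\alpha_{k+1}-\alpha_k}{\alpha_{k+1}}\bigr| \leq \beta/k$ by the same elementary computation used for $(\alpha_k-\alpha_{k+1})/\alpha_k$ in the proof of Lemma~\ref{lem_t1}, and $\|\Psi_{k+1}^K\|_\infty \leq \|A^{-1}\|_\infty + \|\Psi_{k+1}^K - A^{-1}\|_\infty = O\!\bigl(1/(\rho(1-\gamma))\bigr)$ by the triangle inequality combined with \cref{eqn_psiA}. Hence $\|(\mathrm{I})\|_\infty = O\!\bigl(1/(k\rho(1-\gamma))\bigr)$. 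For $(\mathrm{II})$, I would rewrite $A\Psi_{k+1}^K - I = A(\Psi_{k+1}^K - A^{-1})$, use $\|A\|_\infty \leq 1+\gamma \leq 2$ (which is an immediate consequence of the row-sum bound $\|I-\alpha A\|_\infty \leq 1 - \alpha\rho(1-\gamma)$ established inside the proof of Lemma~\ref{lem_t1}), and then plug in the error estimate \cref{eqn_psiA}. This decomposes $\|(\mathrm{II})\|_\infty$ into three pieces of order $\frac{1}{k^{1+\beta}(\rho(1-\gamma))^{(2-\beta)/(1-\beta)}}$, $\frac{1}{k(\rho(1-\gamma))^{2}}$, and an exponentially decaying tail $\frac{(1-\rho(1-\gamma)\alpha_K)^{K-k}}{k^\beta\rho(1-\gamma)}$.

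The main obstacle is the final bookkeeping: showing that each of the four subterms reduces to $O(1/k^\beta)$ once the hypothesis $k \geq (\rho(1-\gamma))^{-1/\beta(1-\beta)}$ is invoked. This hypothesis is equivalent to $1/(\rho(1-\gamma)) \leq k^{\beta(1-\beta)}$, which, combined with the elementary inequality $(1-\beta)^2 \geq 0$ (equivalently $1-\beta+\beta^2 \geq \beta$), lets me trade each negative power of $\rho(1-\gamma)$ against a positive power of $k$. This is exactly what controls term $(\mathrm{I})$ and the first and third subterms of $(\mathrm{II})$; the exponential tail is trivially $O(1/k^\beta)$ for all $k$. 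The middle subterm $\frac{1}{k(\rho(1-\gamma))^{2}}$ is the delicate piece, and it is precisely its requirement that dictates the exponent $1/\beta(1-\beta)$ in the hypothesis; the same power-exchange argument, with any residual factors of $\rho(1-\gamma)$ absorbed into the implicit constant of the $O(\cdot)$, yields the claimed rate.
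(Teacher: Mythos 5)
Your argument is essentially the paper's argument: both decompose $\Psi_{k+1}^K - \Psi_k^K$ into a stepsize-increment piece proportional to $(\alpha_{k+1}-\alpha_k)\Psi$ and a piece of the form $\alpha_k\bigl(A\Psi - I\bigr) = \alpha_k A(\Psi - A^{-1})$, then control the first by $|\alpha_{k+1}-\alpha_k|/\alpha_{k+1} \lesssim \beta/k$ and the second by \cref{eqn_psiA}, finally invoking the hypothesis $k \geq (\rho(1-\gamma))^{-1/\beta(1-\beta)}$ to absorb the $\rho(1-\gamma)$ powers. Your derivation via the one-step recursion $\Psi_k^K = \alpha_k I + \tfrac{\alpha_k}{\alpha_{k+1}}(I-\alpha_{k+1}A)\Psi_{k+1}^K$ is a slightly cleaner route to the same decomposition than the paper's direct telescoping (the paper's intermediate identity even has a sign slip, writing $I + \sum$ where $-I + \sum$ is correct). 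One small correction: $\|A\|_\infty \leq 2$ does not follow from $\|I-\alpha A\|_\infty \leq 1-\alpha\rho(1-\gamma)$; rather, bound it directly via $\|A\|_\infty = \|D - \gamma D P^{\pi^*}\|_\infty \leq 1 + \gamma \leq 2$. Your identification of the subterm $1/(k\rho^2(1-\gamma)^2)$ as the delicate one is apt — a pure power-exchange against the stated threshold only closes it cleanly when $\beta \leq 1/2$ — but the paper's own treatment (asserting the bracketed difference is $O(I)$ without elaboration) has exactly the same looseness, so your proposal faithfully reproduces the paper's reasoning rather than introducing a new gap.
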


\begin{proof}
First, we have
\begin{align*}
\Psi_{k+1}^K - \Psi_{k}^K &=\alpha_{k+1} \sum_{i=k+1}^{K} \left( \prod_{j=k+1}^{i-1} (I - \alpha_j A) \right) - \alpha_{k} \sum_{i=k}^{K} \left( \prod_{j=k}^{i-1} (I - \alpha_j A) \right)\\
&= (\alpha_{k+1}-\alpha_k) \sum_{i=k+1}^{K} \left( \prod_{j=k+1}^{i-1} (I - \alpha_j A) \right) \\
&\indent + \alpha_{k} \left[\sum_{i=k+1}^{K} \left( \prod_{j=k+1}^{i-1} (I - \alpha_j A) \right) -\sum_{i=k}^{K} \left( \prod_{j=k}^{i-1} (I - \alpha_j A) \right) \right].
\end{align*}
For the first term above, by a similar analysis as in the proof of Lemma \ref{lem_t1} we have
\begin{align*}
\left\| (\alpha_{k+1}-\alpha_k) \sum_{i=k+1}^{K} \left( \prod_{j=k+1}^{i-1} (I - \alpha_j A) \right)\right\|_{\infty} &\leq \left( \frac{1}{k^{\beta}}-\frac{1}{(k+1)^{\beta}} \right) \cdot  O\left( \frac{1}{(\rho(1-\gamma))^{\frac{1}{1-\beta}}}  \right) \\
&\leq O\left( \frac{1}{k^{1+\beta}(\rho(1-\gamma))^{\frac{1}{1-\beta}}}  \right) \leq O\left(\frac{1}{k}\right)
\end{align*}
for $k \geq (\rho(1-\gamma))^{-1/\beta(1-\beta)}$. For the second term, we observe
\begin{align*}
\sum_{i=k+1}^{K} \left( \prod_{j=k+1}^{i-1} (I - \alpha_j A) \right) -\sum_{i=k}^{K} \left( \prod_{j=k}^{i-1} (I - \alpha_j A) \right) &= I + \sum_{i=k}^{K} \left( \prod_{j=k+1}^{i-1} (I - \alpha_j A) - \prod_{j=k}^{i-1} (I - \alpha_j A) \right)\\
&= I + \sum_{i=k}^{K} \left( \alpha_k A \prod_{j=k+1}^{i-1} (I - \alpha_j A) \right)\\
&= O(I)
\end{align*}
for $k \geq (\rho(1-\gamma))^{-1/\beta(1-\beta)}$. Thus, the second term is of order $\alpha_k$. Putting them together,
\begin{align*}
\left\| \Psi_{k+1}^K - \Psi_{k}^K \right\|_{\infty} \leq O\left( \frac{1}{k^{\beta}}\right).
\end{align*}
\end{proof}

The following lemmas provide finite-sample convergence guarantees of asynchronous Q-learning and Lipschitzness of the operator $F(\cdot,s)$ \citep{chen2021lyapunov}.

\begin{lemma}[Theorem B.1 in \citet{chen2021lyapunov}]
\label{lem_t4}
Let $\alpha_i = \alpha{(i + b)^{-\beta}}$ for some problem-dependent constants $\alpha, b > 0$ and $\beta \in (0,1)$. For the Q-learning updates in \cref{update_Q}, under Assumption \ref{asm_2}, we have $\mathbb{E}\| Q_k - Q^{*} \|_{\infty} \leq O\left( \sqrt{\frac{t_k}{(1-\gamma)^2\rho^2 k}} \right)$, where $t_k=O(\log(1/\alpha_k))$ denotes the mixing time.
\end{lemma}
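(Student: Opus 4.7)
Since Lemma~\ref{lem_t4} is a restatement of a known finite-sample guarantee for asynchronous Q-learning, my plan follows the Lyapunov-based stochastic-approximation analysis from \cite{chen2021lyapunov}. The three ingredients I would combine are a smooth surrogate for $\|\cdot\|_\infty^2$, the contraction of the expected operator $\bar F$, and a mixing-time decomposition to control the Markovian noise.

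First, I would introduce the generalized Moreau envelope $M(\cdot)$ of $\tfrac{1}{2}\|\cdot\|_\infty^2$, a convex $L_M$-smooth function satisfying $c_1\|x\|_\infty^2 \leq M(x) \leq c_2\|x\|_\infty^2$ with $L_M, c_1, c_2$ depending only on $|\mathcal{S}||\mathcal{A}|$. Smoothness is essential because the empirical update $F(Q_k, y_k)$ involves a non-smooth $\max_a$, so one cannot Taylor-expand $\|\cdot\|_\infty^2$ directly. Combining smoothness of $M$ with the contraction of $\bar F$ in $\|\cdot\|_\infty$ at rate $1-\rho(1-\gamma)$ — which follows from Lemma~3 together with the $\gamma$-contraction of $\mathcal{T}$ and the minimum visitation probability $\rho$ — a one-step expansion of $M(Q_{k+1}-Q^*)$ yields the drift inequality
\begin{align*}
M(Q_{k+1}-Q^*) \leq (1-c\alpha_k)\,M(Q_k-Q^*) + c'\alpha_k^2\|F_k-Q_k\|_\infty^2 + \alpha_k\,\langle \nabla M(Q_k-Q^*),\, F_k-\bar F_k\rangle,
\end{align*}
with $c \asymp \rho(1-\gamma)$.

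The main obstacle is the inner-product term on the right-hand side, which is a Markov-noise bias that does not vanish in expectation because $y_k$ depends on the past. I would handle it with the standard $t_k$-shift trick: replace $Q_k$ by $Q_{k-t_k}$ inside the inner product and split the discrepancy into (a) a drift error, controlled by $\|Q_k-Q_{k-t_k}\|_\infty = O(t_k\alpha_{k-t_k})$ obtained by telescoping the update rule and invoking Lipschitzness of $\nabla M$; and (b) a conditional-expectation error, controlled after conditioning on $\mathcal{F}_{k-t_k}$ by the total-variation bound $\|\tilde P^{t_k}(s,\cdot)-\tilde\mu\|_{\mathrm{TV}}\leq \alpha_k$ built into the definition of $t_k$. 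The net contribution is of order $\alpha_k^2 t_k/(\rho(1-\gamma))^2$, so upon taking expectations we obtain the recursion
\begin{align*}
\mathbb{E}[M(Q_{k+1}-Q^*)] \leq (1-c\alpha_k)\,\mathbb{E}[M(Q_k-Q^*)] + O\!\left(\frac{\alpha_k^2 t_k}{(\rho(1-\gamma))^2}\right).
\end{align*}

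Finally, with $\alpha_k = \alpha(k+b)^{-\beta}$, unrolling this recursion via the usual $\prod_{j}(1-c\alpha_j)$ estimate gives $\mathbb{E}[M(Q_k-Q^*)] = O\bigl(t_k/((1-\gamma)^2\rho^2 k)\bigr)$; passing from the smoothed surrogate back to $\|\cdot\|_\infty$ and applying Jensen's inequality yields the advertised bound $\mathbb{E}\|Q_k - Q^*\|_\infty \leq O\bigl(\sqrt{t_k/((1-\gamma)^2\rho^2 k)}\bigr)$. I expect the mixing-time decomposition to be the most delicate step — specifically, verifying that the bias after conditioning on $\mathcal{F}_{k-t_k}$ is genuinely $O(\alpha_k)$ rather than $O(1)$ — while the remainder is a routine Lyapunov-drift calculation.
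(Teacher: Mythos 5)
Your route is the right one and is, in effect, the only proof available for comparison: the paper does not prove Lemma~\ref{lem_t4} itself but imports it as Theorem~B.1 of \cite{chen2021lyapunov}, whose argument is precisely the generalized-Moreau-envelope Lyapunov drift plus the $t_k$-shift conditioning you describe. On the level of ideas (smooth surrogate $M$ for $\tfrac12\|\cdot\|_\infty^2$, contraction of $\bar F(Q)=D\mathcal{T}(Q)+(I-D)Q$ at rate $1-\rho(1-\gamma)$, splitting the Markovian bias into a drift error of size $O(t_k\alpha_{k-t_k})$ and a conditional-expectation error of size $O(\alpha_k)$ via the definition of $t_k$), your reconstruction matches the cited analysis and the role the lemma plays in the paper.

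The step that does not go through as written is the final unrolling. With $\alpha_k=\alpha(k+b)^{-\beta}$, $\beta\in(0,1)$, the recursion $\mathbb{E}[M(Q_{k+1}-Q^*)]\le(1-c\alpha_k)\mathbb{E}[M(Q_k-Q^*)]+O\bigl(\alpha_k^2 t_k/(\rho(1-\gamma))^2\bigr)$ with $c\asymp\rho(1-\gamma)$ equilibrates at order $\alpha_k t_k/\bigl(c\,(\rho(1-\gamma))^2\bigr)\asymp t_k\,k^{-\beta}(\rho(1-\gamma))^{-3}$: the product $\prod_j(1-c\alpha_j)$ kills the initial condition quickly, but the accumulated noise scales like the \emph{current stepsize}, i.e.\ $k^{-\beta}$, not $k^{-1}$. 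A $1/k$ rate for the squared error comes out of this Lyapunov recursion only with $\beta=1$ and $\alpha$ sufficiently large relative to $1/(\rho(1-\gamma))$, which is excluded by the hypothesis $\beta\in(0,1)$; likewise the power of $\rho(1-\gamma)$ you end up with after dividing by $c$ is $-3$ for the squared error, not $-2$. So as a proof of the displayed bound your last sentence asserts a conclusion the recursion does not deliver: you would either have to state the result as $\mathbb{E}\|Q_k-Q^*\|_\infty\le O\bigl(\sqrt{t_k\alpha_k/(\rho(1-\gamma))^{3}}\bigr)$ (with the exact exponents as tracked in \cite{chen2021lyapunov}), or supply an additional argument upgrading $k^{-\beta}$ to $k^{-1}$. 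In fairness, this mismatch is inherited from the lemma statement itself, which quotes a $1/\sqrt{k}$ rate while assuming $\beta\in(0,1)$; but within your derivation it is a genuine gap, and it matters downstream because the paper invokes this lemma with the $1/i$ scaling when bounding Term~(2) and Term~(4).
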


\begin{lemma}[Proposition 3.1 in \citet{chen2021lyapunov}]
\label{lem_lipF}
For the operator $F(\cdot,\cdot)$ defined in \cref{update_F},  under Assumption \ref{asm_2}, we have $\|F(Q_1,s)-F(Q_2,s)\|_{\infty} \leq 2 \| Q_1 - Q_2\|_{\infty} $ for any $s \in \tilde{S}$.
\end{lemma}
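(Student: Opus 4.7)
The plan is to prove the inequality by direct case analysis on the coordinate $(s',a')$ of the output vector, exploiting the fact that the operator $F(\cdot,s)$ only modifies a single entry of its input relative to the identity map.

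First, I will fix a tuple $s=(s_k,a_k,s_{k+1})\in\tilde{S}$ and consider two cases based on the coordinate $(s',a')\in\mathcal{S}\times\mathcal{A}$ at which we evaluate $F(Q,s)$. When $(s',a')\neq(s_k,a_k)$, the indicator in \cref{update_F} vanishes, so $[F(Q_i,s)](s',a')=Q_i(s',a')$ and hence
\begin{align*}
|[F(Q_1,s)-F(Q_2,s)](s',a')|=|Q_1(s',a')-Q_2(s',a')|\le\|Q_1-Q_2\|_{\infty}.
\end{align*}
When $(s',a')=(s_k,a_k)$, the term $Q_i(s_k,a_k)$ added back by \cref{update_F} cancels with the $-Q_i(s_k,a_k)$ inside $\Gamma$, yielding
\begin{align*}
[F(Q_i,s)](s_k,a_k)=r_k(s_k,a_k)+\gamma\max_{a}Q_i(s_{k+1},a).
\end{align*}

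Next, I will bound the difference at this coordinate using the standard non-expansiveness of the $\max$ operator, namely $|\max_a Q_1(s_{k+1},a)-\max_a Q_2(s_{k+1},a)|\le\max_a|Q_1(s_{k+1},a)-Q_2(s_{k+1},a)|\le\|Q_1-Q_2\|_{\infty}$. This gives
\begin{align*}
|[F(Q_1,s)-F(Q_2,s)](s_k,a_k)|\le\gamma\|Q_1-Q_2\|_{\infty}\le\|Q_1-Q_2\|_{\infty}.
\end{align*}

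Taking the maximum over both cases yields $\|F(Q_1,s)-F(Q_2,s)\|_{\infty}\le\|Q_1-Q_2\|_{\infty}\le 2\|Q_1-Q_2\|_{\infty}$, which is exactly the stated bound (with room to spare). There is no real obstacle here; the only subtlety is the observation that the $Q_i(s_k,a_k)$ terms cancel at the modified coordinate, turning what looks like a $(1+\gamma)$-Lipschitz bound obtained from a naive triangle inequality on $\Gamma$ into the cleaner factor written in the lemma statement. If desired, one can skip the cancellation and obtain the factor $2$ directly by applying the triangle inequality to $\Gamma(Q_1,s)-\Gamma(Q_2,s)$ and the residual $Q_1(s',a')-Q_2(s',a')$ term, using $\gamma<1$, which is the route the lemma's statement seems to follow.
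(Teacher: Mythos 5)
Your proof is correct, and it is worth noting that the paper itself does not prove this lemma at all: it is imported by citation as Proposition 3.1 of \cite{chen2021lyapunov}. Your argument is therefore a genuinely self-contained alternative, and it is the natural one: split on whether the evaluated coordinate equals $(s_k,a_k)$, note that off that coordinate $F(\cdot,s)$ is the identity, and on that coordinate the $Q_i(s_k,a_k)$ term cancels the $-Q_i(s_k,a_k)$ inside $\Gamma$, leaving $r_k(s_k,a_k)+\gamma\max_a Q_i(s_{k+1},a)$, whose difference is controlled by the non-expansiveness of $\max$. This actually yields the sharper bound $\|F(Q_1,s)-F(Q_2,s)\|_{\infty}\le \max\{1,\gamma\}\,\|Q_1-Q_2\|_{\infty}=\|Q_1-Q_2\|_{\infty}$, which of course implies the stated factor $2$; the citation-based route buys the paper brevity, while yours buys transparency and a tighter constant. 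One small inaccuracy in your closing aside: skipping the cancellation and applying the triangle inequality naively at the modified coordinate gives $|\Gamma(Q_1,s)-\Gamma(Q_2,s)|\le(1+\gamma)\|Q_1-Q_2\|_{\infty}$ plus the residual $\|Q_1-Q_2\|_{\infty}$, i.e.\ a factor $2+\gamma$, not $2$, so that alternative route does not quite recover the stated constant; this does not affect your main argument, which already delivers the bound with room to spare.
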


\end{document}